\newcommand*{\colorboxed}{}
\def\colorboxed#1#{%
  \colorboxedAux{#1}%
}
\newcommand*{\colorboxedAux}[3]{%
  \begingroup%
    \colorlet{cb@saved}{.}%
    \color#1{#2}%
    \boxed{%
      \color{cb@saved}%
      #3%
    }%
  \endgroup%
}
\newcommand\Tstrut{\rule{0pt}{2.6ex}}         
\newcommand\Bstrut{\rule[-0.9ex]{0pt}{0pt}}   
\newcommand{\annotation}[1]{}
\newcommand{\TODOref}[1]{\annotation{\textcolor{red}{[ref]}}}
\newcommand{\rtrl}{\textsc{rtrl}\xspace}
\newcommand{\bptt}{\textsc{bptt}\xspace}
\newcommand{\uoro}{\textsc{uoro}\xspace}
\newcommand{\preuoro}{\textsc{preuoro}\xspace}
\newcommand{\reinforce}{\textsc{reinforce}\xspace}
\newcommand{\rnn}{\textsc{rnn}\xspace}
\newcommand{\rnns}{\rnn{}s\xspace}
\newcommand{\gir}{\textsc{gir}\xspace}
\newcommand{\jac}[2]{\smash{\mathcal{J}^{#1}_{#2}}}
\newcommand{\dldw}{\jac{L}\theta}
\newcommand{\dltdw}[1]{\jac{L_{#1}}{\theta}}
\newcommand{\dldwt}[1]{\jac{L}{\theta_{#1}}}
\newcommand{\dltdws}[2]{\jac{L_{#1}}{\theta_{#2}}}
\newcommand{\dltdwt}[1]{\jac{L_{#1}}{\theta_{#1}}}
\newcommand{\dldht}[1]{\jac{L}{h_{#1}}}
\newcommand{\dltdht}[1]{\jac{L_{#1}}{h_{#1}}}
\newcommand{\dltdhs}[2]{\jac{L_{#1}}{h_{#2}}}
\newcommand{\dhtdhs}[2]{\jac{h_{#1}}{h_{#2}}}
\newcommand{\dhtdw}[1]{\jac{h_{#1}}{\theta}}
\newcommand{\dhtdwt}[1]{\jac{h_{#1}}{\theta_{#1}}}
\newcommand{\dhtdws}[2]{\jac{h_{#1}}{\theta_{#2}}}
\newcommand{\dltdzt}[1]{\jac{L_{#1}}{z_{#1}}}
\newcommand{\dltdzs}[2]{\jac{L_{#1}}{z_{#2}}}
\newcommand{\dhtdzt}[1]{\jac{h_{#1}}{z_{#1}}}
\newcommand{\dhtdzs}[2]{\jac{h_{#1}}{z_{#2}}}
\newcommand{\dldzt}[1]{\jac{L}{z_{#1}}}
\newcommand{\dztdwt}[1]{\jac{z_{#1}}{\theta_{#1}}}
\newcommand{\ones}{\vec{1}}
\newcommand{\onehot}[1]{e_{#1}}
\newcommand{\indicator}[1]{\mathbbm{1}_{#1}}
\newcommand{\inlinefrac}[2]{\nicefrac{#1}{#2}}
\newcommand{\superfrac}[2]{\nicefrac{#1}{#2}}
\newcommand{\htilde}{\smash[t]{\tilde{h}}}
\newcommand{\wtilde}{\smash[t]{\tilde{w}}}
\newcommand{\bigO}{\mathcal{O}}
\DeclareMathOperator{\diag}{diag}
\let\originalleft\left
\let\originalright\right
\renewcommand{\left}{\mathopen{}\mathclose\bgroup\originalleft}
\renewcommand{\right}{\aftergroup\egroup\originalright}
\newtheorem{claim}[theorem]{Claim}
\newtheorem{observation}[theorem]{Observation}
\newcommand{\E}{\mathbb{E}}
\DeclareMathOperator{\Var}{Var}
\DeclareMathOperator{\Cov}{Cov}
\DeclareMathOperator{\vectorized}{vec}
\newcommand{\thetitle}{On the Variance of Unbiased Online Recurrent Optimization}
\begin{document}

\title{\thetitle}

\author{%
\name Tim Cooijmans\thanks{Work partially carried out at DeepMind} \email cooijmat@mila.quebec \\
\addr Mila, Universit\'e de Montr\'eal \\
6666 St-Urbain, \#200 \\
Montreal, QC H2S 3H1, Canada
\AND
\name James Martens \email jamesmartens@google.com \\
\addr DeepMind \\
6 Pancras Square \\
London, N1C 4AG, United Kingdom
}


\maketitle

\begin{abstract}
  The recently proposed Unbiased Online Recurrent Optimization (\uoro) algorithm \citep{tallec2018unbiased} uses an unbiased approximation of \rtrl to achieve fully online gradient-based learning in \rnn{}s.  In this work we analyze the variance of the gradient estimate computed by \uoro, and propose several possible changes to the method which reduce this variance both in theory and practice. We also contribute significantly to the theoretical and intuitive understanding of \uoro (and its existing variance reduction technique), and demonstrate a fundamental connection between its gradient estimate and the one that would be computed by \reinforce if small amounts of noise were added to the \rnn's hidden units.
\end{abstract}

\begin{keywords}
recurrent neural networks, credit assignment, automatic differentiation
\end{keywords}

\section{Introduction}









All learning algorithms are driven by some form of credit assignment---identification of the causal effect of past actions on a learning signal \citep{minsky1961steps,sutton1984temporal}.
This enables agents to learn from experience by amplifying behaviors that lead to success, and attenuating behaviors that lead to failure.
The problem of performing efficient and precise credit assignment, especially in temporal agents, is a central one in artificial intelligence.

Knowledge of the inner workings of the agent can simplify the problem considerably, as we can trace responsibility for the agent's decisions back to its parameters.
In this work, we consider credit assignment in recurrent neural networks \citep[\rnns;][]{elman1990finding,hochreiter1997long}, where the differentiability of the learning signal with respect to past hidden units allows us to assign credit using derivatives. But even with this structure, \emph{online} credit assignment across long or indefinite stretches of time remains a largely unsolved problem.

Typically, differentiation occurs by Backpropagation Through Time \citep[\bptt;][]{rumelhart1986learning, werbos1990backpropagation}, which requires a ``forward pass'' in which the network is evaluated for a length of time, followed by a ``backwards pass'' in which gradient with respect to the model's parameters is computed.  This is impractical for very long sequences,  and a common trick is to ``truncate'' the backwards pass after some fixed number of iterations \citep{williams1990efficient}.
As a consequence, parameter updates are infrequent, expensive, and limited in the range of temporal dependencies
they reflect.

\bptt's more natural dual, Real-Time Recurrent Learning \citep[\rtrl;][]{williams1989learning},
carries gradient information forward rather than backward.
It runs alongside the model and provides parameters updates at every time step.
To do so, however, it must retain a large matrix relating the model's internal state to its parameters.
Even when this matrix can be stored at all, updating it is prohibitively expensive.
Various approximations to \rtrl have been proposed \citep[e.g.][]{mak1999improvement} in order to obtain cheaper gradient estimates
at the cost of reducing their accuracy.

In this paper we consider Unbiased Online Recurrent Optimization \citep[\uoro;][]{ollivier2015training, tallec2018unbiased}, an unbiased stochastic approximation to \rtrl that compresses the gradient information through random projections. We analyze the variance of the \uoro gradient estimator, relate it to other gradient estimators, and propose various modifications to it that reduce its variance both in theory and practice.


\section{Outline of the Paper}

We begin with a detailed discussion of the relationship and tradeoffs between \rtrl and \bptt in Section \ref{sec:autodiff}.
Before narrowing our focus to approximations to \rtrl, we briefly review
other approaches to online credit assignment in Section \ref{sec:related_work}.
We then contribute a novel and arguably more intuitive derivation of the \uoro algorithm in Section \ref{sec:uoro}.

In Sections \ref{sec:analysis} and \ref{sec:variance_reduction} we give our main contribution in the form of a thorough analysis of \uoro and the variance it incurs., and derive a new variance reduction method based on this analysis.
Sections \ref{sec:gir_limitations} and \ref{sec:gir_objective} discuss limitations of the variance reduction scheme of \citet{tallec2018unbiased},
and in Section \ref{sec:generalized_recursions} propose to augment its scalar coefficients with matrix-valued transformations.
We develop a framework for analysis of \uoro-style estimators in
Sections \ref{sec:total_gradient_estimate} and \ref{sec:total_variance},
which allows us to determine the total variance incurred when
accumulating consecutive gradient estimates over time.
Working within this framework,
we derive a formula for matrices that gives the optimal variance reduction subject to certain structural constraints (Section \ref{sec:optimizing_q}).
We evaluate our theory in a tractable empirical setting in Section \ref{sec:q_experiments},
and explore avenues toward a practical algorithm in Section \ref{sec:practical_considerations}.

Section~\ref{sec:preuoro} introduces a variant of \uoro that avoids one of its two levels of approximation.
It exploits the fact that gradients with respect to weight matrices are naturally rank-one.
We show this reduces the variance by a factor on the order of the number of hidden units, at the
cost of increasing computation time by the same factor.

Finally, we study the relationship between \uoro and \reinforce \citep{williams1992simple} in Section~\ref{sec:reinforce}.
The analysis uncovers a close connection when \reinforce is used to train \rnns with perturbed hidden states.
We show that when this noise is annealed, the \reinforce estimator converges to the \uoro estimator plus an additional term that has expectation zero but unbounded variance.

\section{Automatic Differentiation in Recurrent Neural Networks} \label{sec:autodiff}

Recurrent Neural Networks \citep[\rnns;][]{elman1990finding,hochreiter1997long} are a general
class of nonlinear sequence models endowed with memory. Given a sequence of
input vectors $x_t$, and initial state vector
$h_0$, an \rnn's state evolves according to
\begin{align*}
  h_t &= F (h_{t-1}, x_t; \theta_t)
\end{align*}
where $F$ is an arbitrary continuously differentiable transition function parameterized by
$\theta_t$ that produces the next state $h_t$ given the previous state $h_{t-1}$
and the current observation $x_t$. Typically, $F$ will take the form of an
affine map followed by a nonlinear function:
\begin{align}
  a_t &= (\begin{array}{ccc}
    h_{t - 1}^{\top} & x_t^{\top} & 1
  \end{array})^{\top} \notag \\
  h_t &= f (W_t a_t) . \label{eqn:F_standard_form}
\end{align}
Here $f(\cdot)$ is the ``activation function'', which is assumed to be continuously differentiable (and is typically nonlinear and coordinate-wise), and $W_t$ is a square matrix parameter whose vectorization is $\theta_t$.

The defining feature of recurrent neural networks as compared to feed-forward
neural networks is the fact that their weights are tied over time. That is, we have
$\theta_t = \theta$. However, we will continue to distinguish the different $\theta_t$'s in the recurrence, as this allows us to refer to
individual ``applications'' of $\theta$ in the analysis (which will be useful later).

Although we will treat the sequence as finite, i.e. $1 \leqslant t \leqslant T$ for
some sequence length $T$, we are interested mainly in \emph{streaming} tasks for
which $T$ may as well be infinite.

At each time step $t$, we incur a loss $L_t$ which is some differentiable
function of $h_t$. In order to minimize the aggregate loss $L = \sum_{t =
1}^T L_t$ with respect to $\theta$, we require an estimate of its gradient
with respect to $\theta$. We will write $\mathcal{J}^y_x$ (or occasionally $\mathcal{J}_x(y)$)
for the Jacobian of $y$ with respect to $x$. We can express the gradient as a double sum over time that factorizes in two interesting ways:
\begin{equation}
\label{eqn:gradientfactorizations}
  \dldw = \sum^T_{t = 1} \sum^T_{s = 1} \dltdws{t}{s}
=
\underbrace{
\sum^T_{s = 1} \left( \sum^T_{t = s} \dltdhs{t}{s} \right) \dhtdwt{s}
}_{\text{reverse accumulation}}
=
\underbrace{
\sum^T_{t = 1} \dltdht{t} \left( \sum^t_{s=1} \dhtdws{t}{s} \right)
}_{\text{forward accumulation}}
\end{equation}

Each of the terms $\dltdws{t}{s}$ indicates how the use of the
parameter $\theta$ at time $s$ affected the loss at time $t$.
This is a double sum over time with $\bigO (T^2)$ terms, but
since future parameter applications do not affect past losses, we have
$\dltdws{t}{s} = 0$ for $s > t$.
Both factorizations exploit this triangular structure
and allow the gradient to be computed in $\bigO (T)$
by recursive accumulation.

By far the most popular strategy for breaking down this computation goes by
the name of Back-Propagation Through Time \citep[\bptt;][]{werbos1990backpropagation}.
It is an instance of what is known as \emph{reverse-mode accumulation} in
the autodifferentiation community,
and relies on the reverse factorization in Equation~\ref{eqn:gradientfactorizations}.
\bptt computes
gradients of total future loss $\dldht{t}$ with respect to states $h_t$ in reverse chronological order by the recursion
\begin{equation} \label{eqn:bptt}
\dldht{t} = \dldht{t + 1} \dhtdhs{t+1}{t} + \dltdht{t}.
\end{equation}
At each step, a term $\dldwt{t} = \dldht{t} \dhtdwt{t}$ of the gradient is accumulated.

Since the quantities $\dhtdhs{t+1}{t}$,
$\dltdht{t}$ and $\dhtdwt{t}$ generally depend on
$h_t$ and $L_t$, the use of \bptt in practice implies running the model forward
for $T$ steps to obtain the sequence of hidden states $h_t$ and losses $L_t$,
and subsequently running backward to compute the gradient.

Its converse, Real-Time Recurrent Learning \citep[\rtrl;][]{williams1989learning}, is an instance of \emph{forward-mode accumulation}.
It exploits the forward factorization of the gradient in Equation~\ref{eqn:gradientfactorizations},
computing Jacobians $\dhtdw{t}$ of hidden states $h_t$ with respect to past applications of the parameter $\theta$
recursively according to
\begin{equation} \label{eqn:rtrl}
\dhtdw{t} = \dhtdhs{t}{t-1} \dhtdw{t-1} + \dhtdwt{t}.
\end{equation}

What \rtrl provides over \bptt is that we can run it forward alongside our
model, and at each time-step $t$ update the model parameters $\theta$ immediately (using $\dltdw{t} = \dldht{t} \dhtdw{t}$), thus performing fully online learning. This is to be contrasted with \bptt, where we
must run the model forward for $T$ time-steps before we can make a parameter update, thus introducing a long delay between the reception of a learning signal $L_t$ and the
parameter update that takes it into account.

There is a caveat to the above, which is that as soon as we update our
parameter $\theta$, the Jacobian $\dhtdw{t}$ accumulated by \rtrl is no
longer quite correct, as it is based on previous values of $\theta$.
However, as argued by \citet{williams1995gradient} and \citet{ollivier2015training} this problem can be mostly ignored as long as the learning rate is small enough
in relation to the rate of the natural decay of the Jacobian (which occurs due to the vanishing gradient phenomenon).

The main drawback of \rtrl is that the accumulated quantity
$\dhtdw{t}$ is a large matrix. If the size of the parameters $\theta$ is
$\bigO (H^2)$ where $H$ is the hidden state size, then this matrix requires $\bigO
(H^3)$ space to store. This is typically much larger than \bptt's $\bigO (T H)$
space. Moreover, the \rtrl recursions involve propagating a matrix forward by
the matrix-matrix product $\dhtdhs{t}{t-1} \dhtdw{t-1}$,
which takes $\bigO (H^4)$ time. \bptt on the other hand only propagates a
vector through time at a cost of $\bigO (H^2)$. Although \rtrl frees us to grow $T$
and capture arbitrarily-long-term dependencies, the algorithm is grossly impractical
for models of even modest size.

\section{Other Approaches to Credit Assignment} \label{sec:related_work}


A number of techniques have been proposed to reduce the memory requirements of \bptt.
Storage of past hidden states may be traded for time by recomputing the states on demand,
in the extreme case resulting in a quadratic-time algorithm.
Better choices for this tradeoff are explored by \citet{chen2016training,gruslys2016memory}.
Reversible Recurrent Neural Networks \citep{mackay2018reversible,gomez2017reversible} allow the on-demand computation of past states to occur in reverse order, restoring the linear time complexity while limiting the model class.
Stochastic Attentive Backtracking \citep{ke2018sparse} sidesteps the storage requirements of backprop through long periods of time by retaining only a sparse subset of states in the distant past. This subset is selected based on an attention mechanism that is part of the model being trained. Gradient from future loss is propagated backwards to these states only through the attention connections.
Synthetic gradients \citep{jaderberg2017decoupled} approximates \bptt by use of a predictive model of
the total future gradient $\dldht{s}$, which is trained online based on \bptt.

Instead of transporting derivatives through time, we may assign credit by transporting value over time.
For example, actor-critic architectures \citep{konda2000actor,barto1983neuronlike} employ
Temporal Difference Learning \citep{sutton1988learning}
to obtain a predictive model of the total future loss.
By differentiation,
the estimated total future loss may be used to estimate
the total future gradient.
More commonly, such estimates are used directly as a proxy for the total future loss, or as a \reinforce baseline.
Along similar lines as our analysis of \reinforce in Section \ref{sec:reinforce}, we may interpret these methods as effectively differentiating the estimate in expectation.
\textsc{rudder} \citep{arjona2018rudder} redistributes the total loss $L$ over time, replacing the immediate losses $L_s$ by surrogates $L'_s$ determined by a process similar to backpropagation through a critic. These surrogates preserve the total loss but in an RL setting may better reflect the long-term impact of the action taken at time $s$.
Temporal Value Transport \citep{hung2018optimizing} relies on attention weights to determine which past time steps were relevant to which future time steps, and injects the estimated total future loss from the future time steps into the immediate loss for the associated past time steps.

\section{Unbiased Online Recurrent Optimization} \label{sec:uoro}

The recently proposed Unbiased Online Recurrent Optimization algorithm \citep[\uoro;][]{tallec2018unbiased}
and its predecessor NoBackTrack \citep{ollivier2015training} approximate \rtrl by maintaining a
rank-one estimate $\htilde_t \wtilde_t^{\top}$ of the Jacobian
$\dhtdw{t}$. We now briefly derive the basic algorithm. 

\subsection{Derivation} \label{sec:uoro_derivation}

First, we note that $\dhtdw{t}$ can be written as
$\dhtdw{t} = \sum_{s \leqslant t} \dhtdhs{t}{s} \dhtdwt{s}$.
We then perform a rank-one projection of each term in this sum using a random vector $\nu_s$ (which is chosen to satisfy $\mathbbm{E} [\nu_s \nu_s^{\top}] = I$). This gives us the estimator
\[
\dhtdw{t} \approx \sum_{s \leqslant t} \dhtdhs{t}{s} \nu_s \nu_s^{\top} \dhtdwt{s} .
\]
Unbiasedness follows from a simple application of linearity of expectation:
\[
\mathbb{E} \Bigl[
\sum_{s \leqslant t} \dhtdhs{t}{s} \nu_s \nu_s^{\top} \dhtdwt{s}
\Bigr]
= 
\sum_{s \leqslant t} \dhtdhs{t}{s} \mathbb{E} [ \nu_s \nu_s^\top ] \dhtdwt{s}
=
\sum_{s \leqslant t} \dhtdhs{t}{s} \dhtdwt{s}. \]
We will refer to this projection as the \emph{spatial projection} to distinguish it from the \emph{temporal projection} that is to follow.

It is interesting to note that $\dhtdhs{t}{s} \nu_s$ can be interpreted as a ``directional Jacobian'', which measures the instantaneous change in $h_t$ as a function of $h_s$'s movement along the direction $\nu_s$. Similarly $\nu_s^{\top} \dhtdwt{s}$ is essentially the gradient of $\nu_s^{\top} h_s$ with respect to $\theta_s$, and thus measures the instantaneous change of $h_s$ along the direction of $\nu_s$, as a function of the change in $\theta_s$. Thus the intuition behind this first approximation is that we are guessing the relevant direction of change in $h_s$ and performing the gradient computations only along that direction.

We can generalize the spatial projection from the standard \uoro method by projecting in the space of any cut vertex $z_s$ on the computational path from $\theta_s$ to $h_s$. For \uoro, $z_s \equiv
h_s$; other choices include $z_s \equiv \theta_s$ for projection in parameter
space, and $z_s \equiv W_s a_s$ for projection in preactivation space.
We will make extensive use of this choice in later Sections.

This gives the generalized estimator
\[
\dhtdw{t} \approx \sum_{s \leqslant t} \dhtdhs{t}{s} \dhtdzt{s} \nu_s \nu_s^\top \dztdwt{s},
\]
which is unbiased following a similar argument as before.

The random projections serve to reduce the large $\dhtdwt{s}$
matrix into the more manageable vector quantities $\dhtdzt{s} \nu_s$
and $\nu_s^{\top} \dztdwt{s}$. But because the sum of rank-one
matrices is not itself rank one, the resultant estimator will still be too expensive to maintain and update online.

In order to obtain a practical algorithm we make a second rank-one approximation, now across time instead of $z$-space. To this end we introduce random scalar coefficients $\tau_s$ satisfying $\mathbbm{E}[\tau_s \tau_r] = \delta_{s r}$ (where $\delta_{s r}$ is the Kronecker delta which is 1 if $s = r$ and 0 otherwise) and define the following rank-one estimator:
\[ \dhtdw{t} \approx \htilde_t \wtilde_t^{\top} \triangleq
\Bigl( \sum_{s \leqslant t} \tau_s \dhtdhs{t}{s} \dhtdzt{s} \nu_s \Bigr)
\Bigl( \sum_{s \leqslant t} \tau_s \nu_s^{\top} \dztdwt{s} \Bigr)
= \sum_{r \leqslant t} \sum_{s \leqslant t} \tau_s \tau_r
  \dhtdhs{t}{s} \dhtdzt{s} \nu_s \nu_r^{\top}
  \dztdwt{r}
 . \]
   
By linearity of expectation this is an unbiased estimate of the previous spatially projected estimator
$\sum_{r \leqslant t} \dhtdzs{t}{s} \nu_s \nu_s^{\top} \dztdwt{s} $,
and is thus also an unbiased estimator of $\dhtdw{t}$, although with potentially much higher variance.

Going forward we will assume that $\tau_s \sim \mathcal{U} \{ - 1, + 1 \}$ are iid random signs
and $\nu_s \sim \mathcal{N} (0, I)$ are iid standard normal vectors,
so that we may treat the product $\tau_s \nu_s$ as a single Gaussian-distributed random vector $u_s \sim \mathcal{N} (0, I)$, which will simplify our analysis. 

The two factors $\htilde_t$ and $\wtilde_t$ of the rank-one approximation are maintained by the following pair of recursions:
\begin{align}
\label{eqn:recursion}
\htilde_t &= \gamma_t \dhtdhs{t}{t-1} \htilde_{t - 1}
             + \beta_t \dhtdzt{t} u_t \nonumber \\
\wtilde_t^{\top} &= \gamma_t^{-1} \wtilde_{t - 1}^{\top}
                    + \beta_t^{-1} u_t^{\top}
                    \dztdwt{t} ,
\end{align}
with $\htilde_0, \wtilde_0$ initialized to zero vectors. Notably these recursions are similar in structure to that used by \rtrl to compute the exact Jacobian $\dhtdw{t}$ (c.f. Equation~\ref{eqn:rtrl}). As with the \rtrl equations, their validity follows from the fact that $\dhtdhs{t}{s} = \dhtdhs{t}{t-1} \dhtdhs{t-1}{t-2} \cdots \dhtdhs{s+1}{s}$.

In these recursions we have introduced coefficients $\gamma_t$ and $\beta_t$ to implement the variance reduction technique from  \citet{tallec2018unbiased,ollivier2015training}, which we will refer to as \emph{greedy iterative rescaling} (\gir). We will discuss \gir in detail in the next subsection.

Finally, at each step we estimate $\dltdw{t} = \dltdht{t} \dhtdw{t}$ using the estimator
$\dltdht{t} \htilde_t \wtilde_t^{\top}$. This is a small
deviation from the one given by \citet{tallec2018unbiased}, which
uses backpropagation to compute $\dltdwt{t}$ exactly, and the remaining part of the gradient, $\sum_{s < t} \dltdws{t}{s}$, is estimated as $\dltdhs{t}{t-1} \htilde_{t - 1} \wtilde_{t - 1}^{\top}$. Although our version has slightly higher variance, it is conceptually simpler.

The projected Jacobians that appear in Equation~\ref{eqn:recursion} can be computed
efficiently without explicitly handling the full Jacobians. Specifically,
$u_t^{\top} \dztdwt{t}$ can be computed by reverse-mode
differentiating $z_t$ with respect to $\theta_t$, and substituting $u_t^{\top}$
in place of the adjoint $\dldzt{t}$. By a similar trick, one
can compute $\dhtdhs{t}{t-1} \htilde_{t - 1}$ and
$\dhtdzt{t} u_t$ using forward-mode differentiation. The resulting algorithm has the same $\bigO (H^2)$ time complexity as
backpropagation through time, but its $\bigO (H^2)$ storage does not grow with
time.

\subsection{Greedy Iterative Rescaling} \label{sec:uoro_gir}

This subsection explains \gir
and the role of the coefficients $\gamma_t, \beta_t > 0$
in Equation~\ref{eqn:recursion}.

\newcommand{\crossone}{\colorboxed{red}{\dhtdhs{t}{t-1} \htilde_{t - 1} \nu_t^{\top} \dztdwt{t}}}
\newcommand{\crosstwo}{\colorboxed{blue}{\dhtdzt{t} \nu_t \wtilde_{t - 1}^{\top}}}

Whereas our above derivation of the algorithm introduced a temporal projection, \citet{ollivier2015training,tallec2018unbiased} interpret the
algorithm given by Equation~\ref{eqn:recursion} as implementing a \emph{series}
of projections.
Under this view, $\htilde_t \wtilde_t^\top$ is a rank-one
estimate of the rank-two matrix that is the sum of
the forwarded previous Jacobian estimate
$\dhtdhs{t}{t-1} \htilde_{t - 1} \wtilde_{t - 1}^{\top}$ and the approximate contribution
$\dhtdzt{t} u_t u_t^{\top} \dztdwt{t}$:
\begin{align*}
  \htilde_t \wtilde_t^{\top} =\; & (
    \gamma_t \dhtdhs{t}{t-1} \htilde_{t - 1}
    + \beta_t \dhtdzt{t} u_t
    )
    (
    \gamma_t^{- 1} \wtilde_{t - 1}^{\top}
    + \beta_t^{- 1} u_t^{\top} \dztdwt{t} 
    )\\
  =\; &
  \dhtdhs{t}{t-1} \htilde_{t - 1} \wtilde_{t - 1}^{\top} + \dhtdzt{t} u_t u_t^{\top} \dztdwt{t}
+ \tau_t \gamma_t \beta_t^{- 1} \crossone + \tau_t \beta_t \gamma_t^{- 1} \crosstwo.
\end{align*}

The temporal ``cross-terms''
$\tau_t \gamma_t \beta_t^{- 1} \crossone$ and
$\tau_t \beta_t \gamma_t^{- 1} \crosstwo$, which are zero in expectation (but contribute variance), constitute the error introduced in the transition from time $t-1$ to $t$. The coefficients $\gamma_t$ and $\beta_t$ provide an extra degree of freedom with which we can minimize this error. As shown by \citet{ollivier2015training}, the minimizers ensure the terms
$\gamma_t \dhtdhs{t}{t-1} \htilde_{t-1}, \beta_t \dhtdzt{t} u_t$
and their $\wtilde_t$ counterparts have small norm,
so that their contribution to the variance is small as well.

The total (trace) variance of $\htilde_t \wtilde_t^\top$ with respect to $\tau_t$ is given by the expected squared Frobenius norm $\norm{\cdot}^2_F$ of the error:
\[
\E_{\tau_t} \Bigl[ \Bigl\| \htilde_t \wtilde_t^\top - \E_{\tau_t} [ \htilde_t \wtilde_t^\top ] \Bigr\|^2_F \Bigr]
= \E_{\tau_t} \Bigl[ \Bigl\|
\tau_t \gamma_t \beta_t^{- 1} \crossone +
\tau_t \beta_t \gamma_t^{- 1} \crosstwo
\Bigr\|^2_F \Bigr].
\]
As the common sign $\tau_t$ does not affect the norm, this is simply
\[
\gamma_t^2 \beta_t^{- 2} \Bigl\| \crossone \Bigr\|^2_F
 \!\!+ \beta_t^2 \gamma_t^{- 2} \Bigl\| \crosstwo \Bigr\|^2_F
 \!\!+ 2 \Bigl< \crossone, \crosstwo \Bigr>_{\!\!F},
\]
where $\left< \cdot , \cdot \right>_F$
denotes the Frobenius inner product.

The coefficients $\gamma_t$ and $\beta_t$ affect the error through the single degree of freedom $\gamma_t^2 \beta_t^{- 2}$. By differentiation and use of the identity $\| x y^\top \|^2_F = \| x \|^2 \| y \|^2$ we find that the optimal choices satisfy
\[
\gamma_t^2 \beta_t^{- 2} \| \dhtdhs{t}{t-1} \htilde_{t - 1} \|^2
                         \| \nu_t^{\top} \dztdwt{t} \|^2
=
\beta_t^2 \gamma_t^{- 2} \| \dhtdzt{t} \nu_t \|^2
                         \| \wtilde_{t - 1} \|^2 .
\]
This includes the solution
$\gamma_t^2 = \inlinefrac{\| \wtilde_{t - 1} \|}{\| \dhtdhs{t}{t-1} \htilde_{t - 1} \|},
 \beta_t^2 = \inlinefrac{\| \nu_t^{\top} \dztdwt{t} \|}{\| \dhtdzt{t} \nu_t \|}$
 from \citet{ollivier2015training}. 
 
 Examining their use in Equation \ref{eqn:recursion} we can see that for this particular solution $\gamma_t$ plays the important role of
 contracting $\wtilde_t$,
 which would otherwise grow indefinitely (being a sum of independent random quantities). While division by $\gamma_t$ in the recursion for $\htilde_t$ causes an expansive effect, this is more than counteracted by the natural contractive property of the Jacobian $\dhtdhs{t}{t-1}$ (which is due to gradient vanishing in well-behaved \rnns).  Thus we can interpret the role of $\gamma_t$ as distributing this contraction evenly between $\smash{\htilde_t}$ and $\wtilde_t$, which limits the growth of both quantities and thus keeps the variance of their product under control. A formal treatment of the growth of variance over time is given by \citet{masse2017around}.

\section{Variance Analysis} \label{sec:analysis}

In this section we analyze the variance behavior of \uoro-style algorithms.
We first discuss limitations of the \gir variance reduction scheme
discussed in Section \ref{sec:uoro_gir},
namely that it is greedy (Section \ref{sec:gir_limitations})
and derives from a somewhat inappropriate objective (Section \ref{sec:gir_objective}).
We then generalize the algorithm and develop a more holistic theoretical framework for its analysis (Sections \ref{sec:generalized_recursions} through \ref{sec:total_variance}).

\subsection{Greedy Iterative Rescaling is Greedy} \label{sec:gir_limitations}

In Section~\ref{sec:uoro_gir} we discussed how \gir can be interpreted as
minimizing the variance of a rank-one estimate $\htilde_t \wtilde_t^\top$
of a rank-two matrix $\dhtdhs{t}{t-1} \htilde_{t-1} \wtilde_{t-1}^\top + \dhtdzt{t} \nu_t \nu_t^\top \dztdwt{t}$ (which is a stochastic approximation that occurs at each step in \uoro).
Here we unify this sequence of approximations into a single temporal rank-one estimation
(as introduced in Section~\ref{sec:uoro_derivation}),
which helps us reveal the inherent limitations of \gir.

Recall that the \uoro recursions (Equation \ref{eqn:recursion}) maintain past contributions in the form of sums $\htilde_t$ and $\wtilde_t$,
and at each step \gir applies respective scaling factors $\gamma_{t+1}$ and $\gamma_{t+1}^{-1}$ (resp.) to these sums. This gives rise to an overall scaling $\alpha_s^{(t)} = \beta_s \gamma_{s+1} \gamma_{s+2} \dots \gamma_t$ (and similarly $(\alpha_s^{(t)})^{-1}$) of contributions made at time step $s$ and propagated forward through time step $t$.
We can write the estimates $\htilde_t \wtilde_t^\top$ produced by \uoro in terms of $\alpha_s^{(t)}$ as follows:
\[
\dhtdw{t} \approx
\htilde_t \wtilde_t^\top
=
\Bigl( \sum_{s \leqslant t} \alpha_s^{(t)} \dhtdzs{t}{s} u_s \Bigr)
\Bigl( \sum_{r \leqslant t} \frac{1}{\alpha_r^{(t)}} u_r^\top \dztdwt{r} \Bigr)
= \sum_{r \leqslant t} \sum_{s \leqslant t}
\frac{\alpha_s^{(t)}}{\alpha_r^{(t)}}
\tau_s \tau_r
   \dhtdhs{t}{s} \dhtdzt{s}
   \nu_s \nu_r^\top
   \dztdwt{r}.
\]
Note that each such estimate is but one element in a \emph{sequence} of estimates.
In the next section, we will establish a notion of the variance
for this sequence, so that we may speak meaningfully about its minimization.
For now, we will consider the minimization of the variance of $\htilde_t \wtilde_t^\top$ at each time step $t$ as an independent problem,
with independent decision variables $\alpha_s^{(t)}$.
The optimal coefficients given by
$(\alpha_s^{(t)})^2 = \inlinefrac{\| \nu_s^\top \dztdwt{s} \|}
                           {\| \dhtdzs{t}{s} \nu_s \|}$
(derived in Appendix \ref{sec:gir_limitations_variance}) minimize the variance of $\htilde_t \wtilde_t^\top$ with respect to $\tau_s$.

This solution is generally different from that of \gir,
which is constrained to have the form $\alpha_s^{(t+1)} = \gamma_{t+1} \alpha_s^{(t)}$ for $s \leqslant t$ (where $\gamma_{t+1}$ is independent of $s$).
This relationship between $\alpha_s^{(t+1)}$ and $\alpha_s^{(t)}$
breaks the independence of consecutive variance minimization problems,
and therefore the resulting coefficients cannot in general be optimal for all $t$.

We can see this by writing the optimal coefficients $\alpha_s^{(t+1)}$ for $s \leqslant t$
that minimize the variance of $\htilde_{t+1} \wtilde_{t+1}^\top$
in terms of the coefficients $\alpha_s^{(t)}$
that minimize the variance of $\htilde_t \wtilde_t^\top$:
\begin{align*}
(\alpha_s^{(t+1)})^2
=\;& \frac{\| \nu_s^\top \dztdwt{s} \|}{\| \dhtdzs{t+1}{s} \nu_s \|}
= \frac{\| \nu_s^\top \dztdwt{s} \|}{\| \dhtdzs{t}{s} \nu_s \|} \frac{\| \dhtdzs{t}{s} \nu_s \|}{\| \dhtdzs{t+1}{s} \nu_s \|}
\\=\;& (\alpha_s^{(t)})^2 \frac{\| \dhtdzs{t}{s} \nu_s \|}{\| \dhtdzs{t+1}{s} \nu_s \|}
= (\alpha_s^{(t)})^2 \left\| \dhtdhs{t+1}{t} \frac{\dhtdzs{t}{s} \nu_s}{\| \dhtdzs{t}{s} \nu_s \|} \right\|^{-1}.
\end{align*}
We see that in order to minimize the variance of $\htilde_{t+1} \wtilde_{t+1}^\top$
given coefficients $\alpha_s^{(t)}$ that minimize the variance of $\htilde_{t} \wtilde_t^\top$,
we should \emph{divide} each contribution $\alpha_s^{(t)} \dhtdzs{t}{s} \nu_s$ by
the square root of its contraction due to forward-propagation through $\dhtdhs{t+1}{t}$,
and \emph{multiply} each $(\alpha_s^{(t)})^{-1} \nu_s^\top \dztdwt{s}$ by the same factor.
Crucially, this factor depends on $s$ and therefore cannot be expressed by \gir,
which is constrained to rescale all past contributions by a constant factor $y_{t+1}$ independent of $s$.
This is true of any algorithm that maintains past contributions in a reduced form
such as $\htilde_t, \wtilde_t$.

\subsection{Greedy Iterative Rescaling Optimizes an Inappropriate Objective} \label{sec:gir_objective}

In the previous subsection, we saw a sense in which \gir is greedy: its ability to
minimize the variance of $\htilde_t \wtilde_t^\top$ is hampered by its own past decisions.
To see this, we took a holistic view of the sequence of variance minimization problems
solved by \gir, and showed that the choice of coefficients $\gamma_s, \beta_s$ at time $s$
constrains the choice of future coefficients.
Here we take a further step back, and argue that the variance of
$\htilde_t \wtilde_t^\top$
is not the right objective in light of the downstream application of these estimates.

The Jacobian estimates
$\htilde_t \wtilde_t^\top \approx \dhtdw{t}$
are used to determine a sequence of \emph{gradient} estimates
$\dltdht{t} \htilde_t \wtilde_t^\top \approx \dltdw{t}$,
which are accumulated by a gradient descent process.
We argue that the quantity of interest is
the variance of the \textbf{total gradient estimate}
$\sum_{t \leqslant T} \dltdht{t} \htilde_t \wtilde_t^\top \approx \dldw$
incurred during $T$ steps of optimization
(which estimates the \textbf{total gradient} $\dldw$).

Since consecutive gradient contributions depend largely on the same stochastic quantities,
the variance of this sum is not simply the sum of the individual variances.
Hence even if we \emph{could} independently minimize the variances of the Jacobian estimates,
doing so is not equivalent to minimizing the variance of the total gradient estimate.

\subsection{Generalized Recursions} \label{sec:generalized_recursions}

Before proceeding with the variance computation we will generalize the \uoro recursions by replacing the $\gamma_t$ and $\beta_t$ coefficients by an invertible matrix $Q_t$ as follows:
\begin{align}
  \label{eqn:generalized_recursion}
  \htilde_t &= \dhtdhs{t}{t-1} \htilde_{t - 1} + \dhtdzt{t} Q_t u_t \nonumber \\
  \wtilde_t^{\top} &= \wtilde_{t - 1}^{\top} + u_t^{\top} Q_t^{-1} \dztdwt{t}
\end{align}
$Q_t$ can be interpreted as modifying the covariance of the noise vector $u_t$ (although differently for either recursion). 
Analogously to the standard \uoro recursions, our generalized recursions compute the following sums:
\begin{align*}
    \htilde_t = \sum_{s \leqslant t} \dhtdhs{t}{s} \dhtdzt{s} Q_s u_s
    \mbox{\quad\quad and \quad\quad}
    \wtilde_t = \sum_{s \leqslant t} u_s^{\top} Q_s^{-1} \dztdwt{s}.
\end{align*}

We can view $Q_s$ as a matrix-valued generalization of the \gir coefficients, with equivalence when $Q_s = \beta_s \gamma_{s+1} \gamma_{s+2} \dots \gamma_{T} I$.
The extra degrees of freedom allow more fine-grained control over the norms of cross-terms,\footnote{By ``cross-term'' we mean a term that appears in the expanded sum which is zero in expectation but contributes variance.}
as can be seen when we expand both the temporal \emph{and} the spatial projections in
the estimator $\htilde_t \wtilde_t^\top$:
\[
\dhtdw{t} \approx \htilde_t \wtilde_t^\top = \sum_{r \leqslant t} \sum_{s \leqslant t} \sum_{i j k l} \dhtdzs{t}{r i} (Q_r)_{i k} u_{r k} u_{s l} (Q_s^{-1})_{l j} \jac{z_{s j}}{\theta_s}
\]
Each term's scaling depends not just on temporal indices $r, s$ 
but now also on the indices $i, j$ of units.
As we shall see, \emph{in expectation},
terms where both the temporal indices $r = s$ and units $i = j$ correspond remain unaffected,
and it is only the undesired cross-terms for which $r \neq s$ or $i \neq j$ that are affected.

\citet{tallec2018unbiased} hint at a related approach which would correspond to choosing
$Q_s = \alpha_s \diag(q_s)$ to be diagonal matrices.
However, they derive their choice
$q_{s i}^2 \propto \| \jac{z_{s i}}{\theta_s} \| / \| \jac{h_s}{z_{s i}} \|$
by optimizing the norms of only temporally corresponding terms for which $r = s$,
and ignoring temporal cross terms $r \neq s$ which make up the bulk of the error.
We instead consider a class of $Q_s$ matrices that is not constrained to be diagonal,
and whose value minimizes a measure of variance that is more relevant to the optimization process.

Thus our recursion in Equation~\ref{eqn:generalized_recursion} is
a strict generalization of the \uoro recursion in Equation~\ref{eqn:recursion}.
The $Q_s$ matrices can express a broad class of variance reduction mechanisms,
including \gir.
That said, our analysis of this system will be limited to cases where the $Q_s$
are independent of the noise vectors $u_t$ for all $s, t$.
Notably, this precludes \gir because of its complex nonlinear interaction with the noise.

\subsection{A Simple Expression for the Gradient Estimate} \label{sec:total_gradient_estimate}

In this subsection we will derive a simple expression for the gradient estimate $\dltdht{t} \htilde_t \wtilde_t^\top$ which will prove useful in our subsequent computations.

To reduce visual clutter we define the following notational aliases, which we will make heavy use of throughout the rest of the manuscript:
\begin{equation*}
    b^{(t)}_s = \dltdzs{t}{s} \mbox{\quad\quad and \quad\quad} J_s = \dztdwt{s}.
\end{equation*}

First, we observe that that $\mathbbm{1}_{s \leqslant t} b^{(t)}_s = b^{(t)}_s$, as derivatives of past losses with respect to future activations are zero. Next we observe that
\begin{align*}
    \dltdht{t} \htilde_t = \sum_{s \leqslant t} \dltdht{t} \dhtdhs{t}{s} \dhtdzt{s} Q_s u_s = \sum_{s \leqslant t} b^{(t) \top}_s Q_s u_s .
\end{align*}
Given these observations we may express the estimate of each gradient contribution $\dltdw{t}$ as
\begin{align*}
  \dltdht{t} \htilde_t \wtilde_t^{\top}
=\;& \Bigl( \sum_{s \leqslant t} b^{(t) \top}_s Q_s u_s \Bigr)
     \Bigl( \sum_{s \leqslant t} u_s^{\top} Q_s^{- 1} J_s \Bigr) \\
=\;& \Bigl( \sum_{s \leqslant T} \mathbbm{1}_{s \leqslant t} b^{(t) \top}_s Q_s u_s \Bigr)
     \Bigl( \sum_{s \leqslant T} \mathbbm{1}_{s \leqslant t} u_s^{\top} Q_s^{- 1} J_s \Bigr) \\
=\;& \Bigl( \sum_{s \leqslant T} b^{(t) \top}_s Q_s u_s \Bigr)
     \Bigl( \sum_{s \leqslant T} \mathbbm{1}_{s \leqslant t} u_s^{\top} Q_s^{- 1} J_s \Bigr) \\
=\; & b^{(t) \top} Q u u^{\top} Q^{- 1} S^{(t)} J ,
\end{align*}
where in the last step we have:
\begin{itemize}
    \item consolidated the temporal and spatial projections by concatenating the $b^{(t)}_s$ into a single vector $b^{(t)}$, and the noise vectors $u_s$ into a single vector $u$,
    \item stacked the $J_s$'s into the matrix $J$,
    \item defined $Q$ to be the block-diagonal matrix $\diag(Q_1, Q_2, \dots, Q_T)$, and
    \item introduced the ``truncated identity matrix'' $S^{(t)}$ with diagonal blocks $S_s^{(t)} = \mathbbm{1}_{s \leqslant t} I$.
\end{itemize}
Finally, the total gradient estimate is given by
\begin{align} \label{eqn:total_gradient_estimate}
\sum_{t \leqslant T} \dltdht{t} \htilde_t \wtilde_t^{\top} = \sum_{t \leqslant T} b^{(t) \top} Q u u^{\top} Q^{- 1} S^{(t)} J .
\end{align}

The $S^{(t)}$ matrix accounts for the fact that at time $t$ of the algorithm,
contributions $\dztdwt{s}$ from future steps $s > t$ are
not included in $\wtilde_t^{\top}$. Omitting this matrix would introduce
terms that are zero in expectation and hence would not bias the total gradient
estimate, but they would still contribute to the variance of the estimator (to a degree which would adversely affect the usefulness of our subsequent analysis).

It is easy to see that this estimator is unbiased as long as
$\mathbb{E} \left[ Q u u^\top Q^{-1} \right] = I$.
This can happen, for example, when $Q$ and $u$ are independent with $\mathbb{E}[u u^\top] = I$. We will focus our analysis on this case.

\subsection{Computing the Variance of the Total Gradient Estimate}
\label{sec:total_variance}

In this section we derive the variance of the total gradient estimate.
We assume that $Q$ is independent of $u$, so that we may use the general results from Appendix~\ref{sec:secondmoment}.

By bilinearity, the covariance matrix of the total gradient estimate is
\begin{align*}
\Var\Bigl[ \sum_{t \leqslant T} \dltdht{t} \htilde_t \wtilde_t^{\top} \Bigr]
= \sum_{t \leqslant T} \sum_{s \leqslant T} \Cov \Bigl[ \dltdht{t} \htilde_t \wtilde_t^{\top}, \dltdht{s} \htilde_s \wtilde_s^{\top} \Bigr].
\end{align*}
Combining this with the identity $\dltdht{t} \htilde_t \wtilde_t^{\top} = b^{(t) \top} Q u u^{\top} Q^{- 1} S^{(t)} J$ from the previous subsection and applying Corollary \ref{cor:variance} (with $\kappa = 0$) yields the following expression for the same quantity:
\begin{equation*}
  \sum_{s \leqslant T} \sum_{t \leqslant T}
  \tr(b^{(s)} b^{(t) \top} Q Q^{\top})
  J^{\top} S^{(s)} (Q Q^{\top})^{- 1} S^{(t)} J 
  + J^{\top} b^{(s)} b^{(t) \top} J .
\end{equation*}
Corollary \ref{cor:variance} also yields the following expression for the \emph{total variance}\footnote{We define the ``total variance'' to be the trace of the covariance matrix.} of the total gradient estimate:
\begin{equation*} \label{eqn:total_gradient_variance_trace}
  \sum_{s \leqslant T} \sum_{t \leqslant T}
  \tr( b^{(s)} b^{(t) \top} Q Q^{\top} )
  \tr(J^{\top} S^{(s)} (Q Q^{\top})^{- 1} S^{(t)} J )
  + \tr( J^{\top} b^{(s)} b^{(t) \top} J ) .
\end{equation*}

\section{Variance Reduction} \label{sec:variance_reduction}

We now turn to the problem of reducing the variance given in Equation \ref{eqn:total_gradient_variance_trace}.
In Sections \ref{sec:optimizing_q} through \ref{sec:practical_considerations} we develop
an improved (though as yet impractical) variance reduction scheme.
Finally, we evaluate our theory in Section \ref{sec:q_experiments}.

\subsection[Optimizing Q subject to restrictions on its form]{Optimizing $Q$ subject to restrictions on its form}
\label{sec:optimizing_q}

Denote by $V(Q)$ the part of the total variance (Equation \ref{eqn:total_gradient_variance_trace}) that depends on $Q$. Making use of the cyclic
property of the trace, and the fact that $Q$ is block-diagonal, we can write this as
\begin{equation} \label{eqn:general_vq}
V(Q) = \sum_{s \leqslant T} \sum_{t \leqslant T}
\trace \Bigl( \sum_{r \leqslant T} b_r^{(s)} b_r^{(t) \top} Q_r Q_r^{\top} \Bigr)
\trace \Bigl( \sum_{r \leqslant T} S_r^{(t)} J_r J_r^{\top} S_r^{(s)} (Q_r Q_r^{\top})^{- 1} \Bigr) .
\end{equation}

We wish to optimize $V(Q)$ with respect to $Q$ in a way that leads to a practical online algorithm.
To this end, we require that $Q_s$ be of the form $Q_s = \alpha_s Q_0$, with $\alpha_s$ a scalar and $Q_0$ a constant matrix.
This restriction makes sense from a practical standpoint;
we envision an algorithm that maintains a statistical estimate of the optimal value of $Q_0$.
The stationarity assumption enables us to amortize over time both the sample complexity of obtaining this estimate, and the computational cost associated with inverting it.

We furthermore assume projection occurs in preactivation space, that is, $z_r \equiv W_r a_r$.
This assumption gives $J_r = \dhtdzt{r} = I \otimes a_r^\top$, which is a convenient algebraic structure to work with.

Even given this restricted form we cannot find the jointly optimal solution for $Q_0$ and $\alpha$. Instead, we will consider optimizing $Q_0$ while holding the $\alpha_s$'s fixed, and vice versa.

\subsubsection[Optimizing alpha coefficients given Q0]{Optimizing $\alpha_s$ coefficients given $Q_0$}
\label{sec:optimizing_alpha}

Let us first simplify the expression for $V(Q)$. Given the restricted form $Q_s = \alpha_s Q_0$ we may write
\begin{equation} \label{eqn:vq_alpha_c}
V(Q) = \sum_{r \leqslant T} \sum_{q \leqslant T} \frac{\alpha_r^2}{\alpha_q^2} C_{q r} ,
\end{equation}
where we have collected the factors that do not depend on $\alpha$ into the matrix $C$ with elements
\begin{align} \label{eqn:matrix_c}
C_{q r} &= \sum_{s \leqslant T} \sum_{t \leqslant T}
\trace \Bigl( b_r^{(s)} b_r^{(t) \top} Q_0 Q_0^{\top} \Bigr)
\trace \Bigl( S_q^{(t)} J_q J_q^{\top} S_q^{(s)} (Q_0 Q_0^{\top})^{- 1} \Bigr) \notag
\\ &=
\trace \Bigl( \sum_{s = q}^T \sum_{t = q}^T b_r^{(s)} b_r^{(t) \top} Q_0 Q_0^{\top} \Bigr)
\trace \Bigl( J_q J_q^{\top} (Q_0 Q_0^{\top})^{- 1} \Bigr) \notag
\\ &= \Bigl\| \sum_{t = q}^T b_r^{(t) \top} Q_0 \Bigr\|^2 \Bigl\| Q_0^{-1} J_q \Bigr\|^2_F.
\end{align}

Now we wish to solve
\begin{align} \label{eqn:alphas_optimization_problem}
\alpha^\star =
    \operatornamewithlimits{argmin}_{\alpha > 0}
    \sum_{r \leqslant T} \sum_{q \leqslant T} \frac{\alpha_r^2}{\alpha_q^2} C_{q r}.
\end{align}
The optimization problem considered here differs from that given in Section \ref{sec:gir_limitations}.
Although the objective considered there
can similarly be written in terms of a matrix like $C$,
that matrix would have rank one (see Appendix \ref{sec:gir_limitations_variance}).
This difference is a consequence of $V(Q)$ being the variance of the \emph{total} gradient estimate rather than
that of a single contribution $\dltdht{t} \htilde_t \wtilde_t^\top$.
In particular, the rank-one property is lost due to our inclusion of the $S^{(t)}$ matrix that discards noncausal terms (see Section \ref{sec:total_gradient_estimate}).

We analyze the problem in Appendix \ref{sec:alphas_optimization_algorithm},
and find that it is an instance of matrix equilibration \citep[see e.g.][for a review]{idel2016review},
for which no closed-form solution is known.
Instead, we give a second-order
steepest-descent update rule that solves
for $\alpha$ numerically,
which we use in our experiments. (Empirically, first-order updates routinely get stuck in cycles on this problem.)


However, solving Equation \ref{eqn:alphas_optimization_problem} directly does not lead to a practical algorithm.
Along the lines of the discussion in Section \ref{sec:gir_limitations}, any algorithm that maintains past contributions as a single sum must take $\alpha_s$ to be $\beta_s \gamma_{s+1} \gamma_{s+2} \dots \gamma_T$ for some coefficient sequences $\{\beta_s\}$ and $\{\gamma_s\}$.
In principle, if $C$ were known upfront, one could choose $\beta_s = \alpha_s^\star$ with $\gamma_s = 1$,
and hence this parameterization appears to be degenerate.
However, $C$ is not known; it depends on gradients $b^{(t)}_r = \dltdzs{t}{r}$ and Jacobians $J_t = \dztdwt{t}$ from future time steps $t > s$.
In light of this, we can view $\beta_s$ as merely an estimate of $\alpha_s^\star$,
to be corrected by future $\gamma_t$'s as more information becomes available.

One way of formalizing this idea of ``incomplete information'' is as follows.
Suppose $C$ were the final element $C^{(T)}$ of a sequence of matrices $C^{(1)} \dots C^{(T)}$,
where each $C^{(s)}$ incorporates all ``information'' available up to time $s$.
Then a natural way to choose $\beta_s$ and $\gamma_s$ at time $s$ would be solve the following optimization problem based on $C^{(s)}$:
\begin{align} \label{eqn:greedy_alphas_optimization_problem}
\beta_s^\star, \gamma_s^\star =
    \operatornamewithlimits{argmin}_{\beta_s, \gamma_s}
    \operatornamewithlimits{min}_{\beta_{>s}, \gamma_{>s}}
    \sum_{r \leqslant T} \sum_{q \leqslant T}
    \frac{\alpha_r^2}{\alpha_q^2}
    C^{(s)}_{q r}.
\end{align}
Past coefficients $\beta_{<s}, \gamma_{<s}$ are known (and fixed),
and the unknown future coefficients $\beta_{>s}, \gamma_{>s}$ are estimated by the inner minimization.

In Appendix \ref{sec:greedy_alphas_example} we explore a natural choice for $C^{(s)}$ where future gradients/Jacobians are treated as though they were 0, which leads to formulas for the coefficients that are similar to \gir's, although not identical.
This approach can be improved by incorporating statistical predictions or estimates of unknown future information in $C^{(s)}$.
We leave further exploration of such schemes to future work.

\subsubsection[Optimizing Q0 given the alphas]{Optimizing $Q_0$ given the $\alpha_s$'s}
\label{sec:optimizing_q0}

Given our assumption that $z_r \equiv W_r a_r$ we have $J_r = I \otimes a^{\top}_r$ and $J_r
J_r^{\top} = (I \otimes a_r^{\top}) (I \otimes a_r) = (I \otimes a_r^{\top}
a_r) = \| a_r \|^2 I$. Thus,
\[ S_r^{(t)} J_r J_r^{\top} S_r^{(s)} (Q_r Q_r^{\top})^{- 1} =\mathbbm{1}_{r
   \leqslant t} \mathbbm{1}_{r \leqslant s} \| a_r \|^2 \alpha_r^{- 2} (Q_0
   Q_0^{\top})^{- 1} , \]
and $V (Q)$ becomes
\[
\sum_{s \leqslant T} \sum_{t \leqslant T}
\trace \Bigl( \sum_{r \leqslant T} \alpha_r^2 b_r^{(s)} b_r^{(t) \top} Q_0 Q_0^{\top} \Bigr)
\trace \Bigl( \sum_{r=1}^{\min (s, t)} \| a_r \|^2 \alpha_r^{- 2} (Q_0 Q_0^{\top})^{- 1} \Bigr)
. \]
Now we can move the scalar $\sum_{r=1}^{\min (s, t)} \| a_r \|^2
\alpha_r^{- 2}$ leftward and group the terms that depend on $s$ and $t$, giving
\begin{equation} \label{eqn:structured_vq}
V (Q) = \trace (B Q_0 Q_0^{\top}) \trace ((Q_0 Q_0^{\top})^{-1}) ,
\end{equation}
where
\begin{equation} \label{eqn:b_matrix}
B = \sum_{s \leqslant T} \sum_{t \leqslant T}
\Bigl( \sum_{q = 1}^{\min(s, t)}  \alpha_q^{- 2} \| a_q \|^2 \Bigr)
\Bigl( \sum_{r \leqslant T} \alpha_r^2 b_r^{(s)} b_r^{(t) \top} \Bigr)
 = \sum_{q \leqslant T} \sum_{r \leqslant T}
  \frac{\alpha_r^2}{\alpha_q^2}
  \| a_q \|^2
  \Bigl( \sum_{s = q}^{T} b_r^{(s)} \Bigr)
  \Bigl( \sum_{t = q}^{T} b_r^{(t)} \Bigr)^{\top}.
\end{equation}

The matrix $B$ is PSD (it is a sum of PSD matrices), and we will further assume it is
invertible. By Theorem~\ref{theorem:minimizer} (which is stated and proved in Appendix~\ref{sec:minimization}) any choice of $Q_0$ satisfying $\eta B Q_0 Q_0^{\top} = (Q_0 Q_0^{\top})^{- 1}$ for some constant $\eta > 0$ will be a global minimizer of $V(Q)$. One such choice is
\begin{align*}
    Q_0 = B^{-\superfrac{1}{4}} .
\end{align*}
This solution, or any other globally optimal one, gives us
\[ V (Q) = \trace (B^{\superfrac{1}{2}})^2 ,\]
where $\lambda$ is the vector of eigenvalues of $B^{1 / 2}$. We can
compare this to the variance attained by temporal scaling only ($Q_0 = I$):
\[ V (Q) = \trace (B) \trace (I) .\]
Writing $\trace (B^{1 / 2})^2 = (\ones^{\top} \lambda)^2$ and
$\trace (B^{}) \trace (I) = \| \ones \|^2 \| \lambda \|^2$,
where $\ones$ is the vector of ones and $\lambda$ is the vector of
eigenvalues of $B^{1 / 2}$, we have by the Cauchy-Schwarz inequality that
\begin{align*}
\trace (B^{\superfrac{1}{2}})^2 = (\ones^{\top} \lambda)^2 \leqslant \| \ones \|^2 \| \lambda \|^2 = \trace (B) \trace (I) .
\end{align*}
This approaches equality as $\lambda$ approaches a multiple of $\ones$, or in other words, as the spectrum of $B^{\superfrac{1}{2}}$ becomes flat. Conversely, the inequality will be more extreme when the spectrum is lopsided, indicating improved variance reduction when using $Q_0 = B^{-\superfrac{1}{4}}$ over the default choice $Q_0 = I$.

\subsubsection{Practical Considerations} \label{sec:practical_considerations}

In practice, the proposed choice of $Q_0$ requires computing
the $B$ matrix and its eigendecomposition.
Computing $B$ involves four levels of summations over time
and seemingly cannot be computed online.
However, we can estimate it using quantities similar to the
ones we use to estimate the gradient.
Appendix \ref{sec:estimating_B} derives the following unbiased estimator of $B$:
\[
B \approx \frac{1}{2} ( \tilde{m}_T \tilde{n}_T^\top + \tilde{n}_T \tilde{m}_T^\top )
\]
where $\tilde{m}_t$ is given by
\[
\sum_{s \leqslant t}
\Bigl( \sum_{q \leqslant s} \sigma_q \alpha_q^{-1} \norm{a_q} \Bigr)
\Bigl( \sum_{r \leqslant s} \tau_r \alpha_r b^{(s)\top}_r \nu_r \Bigr)
\Bigl( \sum_{r \leqslant s} \nu_r \Bigr)
\]
and $\tilde{n}_t$ is like $\tilde{m}_t$ except with spatial noise $\mu_r$ instead of and independent of $\nu_r$.
In these expressions, $\sigma, \mu$ are temporal and spatial noise vectors distributed identically to $\tau, \nu$.
This extra layer of stochastic approximation severely degrades the quality of the estimates.
Additionally, the estimator depends on unknown future
quantities, such as the total future gradient with respect to all time steps.
As detailed in Appendix \ref{sec:estimating_B},
we may compute intermediate estimates based on $\tilde{m}_t, \tilde{n}_t$ for $t < T$.
To the extent that $B$ is stationary,
a moving average of these intermediate estimates
can serve as a good approximation to $B$.

Empirically however, computing $Q_0$ based on this kind of estimator does not seem to improve optimization performance, due to its high variance. We leave a broader exploration of approximation algorithms for $B$ to future work, while noting that an estimator for $B$ need not be unbiased in order for us to obtain an unbiased estimate of the gradient. Indeed, \emph{any} invertible choice of $Q_0$ will result in an unbiased estimate of the gradient, as was shown in Section \ref{sec:total_gradient_estimate}. Unbiasedness may not even be a particularly desirable property for the $B$ estimator to have, compared to other reasonable-sounding properties such as positive-semidefiniteness.

Once we have our estimate $\hat{B}$ of $B$ and wish to compute its fourth root,
the $\bigO (H^3)$ cost of factorization could be amortized
by only performing it every so often
or maintaining the estimate in factored form.
It is often advisable to ``dampen'' or ``regularize'' the estimate by adding a multiple of the identity, i.e.
\[
Q_0 = (\hat{B} + \lambda I)^{\superfrac{1}{4}}
\]
where the hyperparameter $\lambda$ serves to control
the amount of trust placed in the estimate
by biasing it towards a flat eigenvalue spectrum (i.e. towards $Q_0 \propto I$).

\subsection{Variance Reduction Experiments} \label{sec:q_experiments}

\begin{figure}
    \centering
    \includegraphics[width=0.9\textwidth]{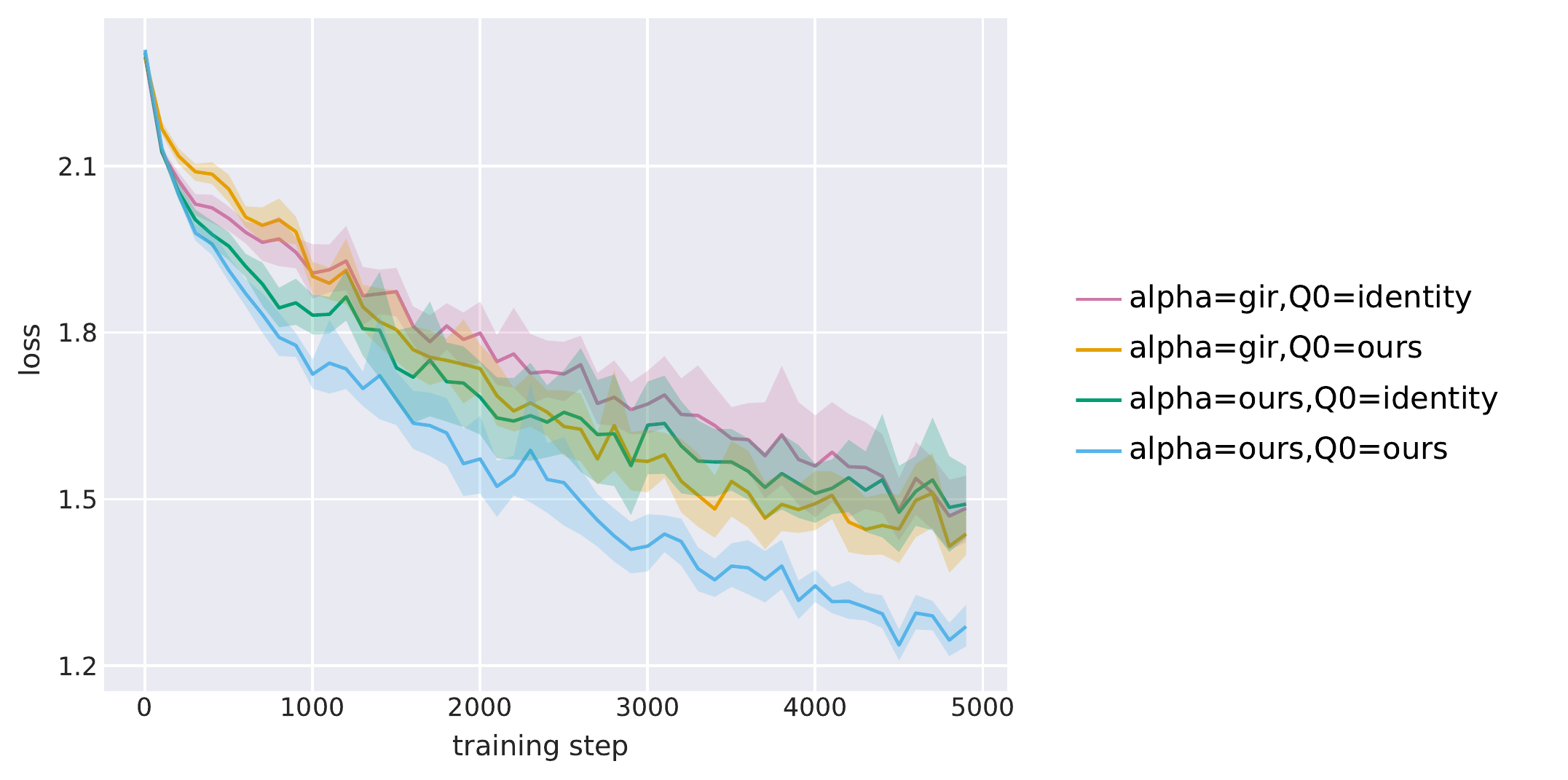}
    \caption{
    Training curves on the row-wise sequential \textsc{mnist} task.
    For each setting we have run 10 trials and plotted the mean of the classification loss and the 95\% confidence interval of the mean.
    For clarity of presentation, these curves have been aggressively smoothed by a median filter prior to the computation of their statistics.}
    \label{fig:seqmnist_curves}
\end{figure}

We empirically evaluate four settings for $Q_s = \alpha_s Q_0$
in a controlled setting based on the sequential \textsc{mnist} task \citep{le2015simple}.
We choose this task because it is episodic;
it gives us access to gradients $b^{(t)}_s$ and Jacobians $J_s$ 
for all $s, t$ by \bptt.
Thus we can compute the matrices $B$ and $C$ from Sections \ref{sec:optimizing_q0} and \ref{sec:optimizing_alpha} exactly.
In order to curb the cost of these computations,
we simplify the task to be row-by-row instead of pixel-by-pixel (i.e. $T = 28$ as opposed to $T = 784$).
Moreover, the model is tasked with classifying the digit
at every step rather than only at the end,
as otherwise $L_t = 0$ and therefore $b^{(t)}_s = 0$ for $t < T$,
trivializing the total gradient estimate (Equation \ref{eqn:total_gradient_estimate}).

For $\alpha_s$, we compare the \gir-style coefficients
\[
\gamma_s^2 = \frac{ \norm{\wtilde_{s-1}} }{ \norm{\dhtdhs{s}{s-1} \htilde_{s-1}} \vphantom{\tilde{h}} }
\mbox{\quad\quad and \quad\quad}
\beta_s^2 = \frac{ \norm{u_s^\top Q_0^{-1} \dztdwt{s}} }{ \norm{\dhtdzt{s} Q_0 u_s} }
\]
against the ones prescribed by our analysis.
In the latter case, we use the algorithm described in Appendix \ref{sec:alphas_optimization_algorithm}
to solve Equation \ref{eqn:alphas_optimization_problem} for $\alpha$.
Given $\alpha$, we derive a sequence of $\gamma, \beta$ coefficients
by setting $\gamma_s$ equal to the geometric average ratio of consecutive $\alpha_s$'s,
and solving for $\beta$ such that $\alpha_s = \beta_s \gamma_{s+1} \dots \gamma_T$ for all $s$.\footnote{
The simpler choice $\gamma_s = 1, \beta_s = \alpha_s$ may run into numerical issues but is otherwise equivalent,
as the distribution of the total scaling $\alpha$ across $\gamma, \beta$ does not affect the variance.}

For $Q_0$, we consider the naive choice $Q_0 = I$ as well as the solution $Q_0 = B^{-\superfrac{1}{4}}$
from Section \ref{sec:optimizing_q0}.
Recall that the optimal $Q_0$ depends on the choice of $\alpha$
and both choices of $\alpha$ depend on the choice of $Q_0$.
We break this circularity by maintaining an exponential moving average $\bar{B}$ of $B$
across episodes, which we use to compute $Q_0$ according to
\[
Q_0 = \Bigl( \bar{B} + \lambda \frac{ \trace (\bar{B}) }{ \trace (I) } I \Bigr)^{\superfrac{1}{4}},
\]
where the amount of damping/regularization is controlled by the hyperparameter $\lambda$.
Given $Q_0$, we compute $\alpha$ exactly,
process the episode and update the parameters
by the total gradient estimate (Equation \ref{eqn:total_gradient_estimate}).
At the end of the episode,
we compute $B$ exactly based on the $\alpha$ used in the episode,
average it across the minibatch,
and use the result to update $\bar{B}$.

The model consists of an \textsc{lstm} \citep{hochreiter1997long} with 50 hidden units.
At each step, the digit is classified by softmax regression based on the hidden state $h_t$.
As the classifier parameters do not affect $h_t$, their gradient is obtained by backprop.
The gradients are averaged across a minibatch of 50 examples and across the duration of each episode, before being passed to the Adam \citep{kingma2014adam} optimizer.
The settings of the learning rate, momentum and $\bar{B}$ decay and dampening
hyperparameters are detailed in Appendix \ref{sec:q_experiments_hyperparameters}.

Figure \ref{fig:seqmnist_curves} shows the training curves for each of the four configurations.
While there is a clear advantage to using both our proposed $\alpha$ and $Q_0$ choices,
that advantage appears to be lost when only one of the two is used.

\begin{figure}
    \centering
    \includegraphics[width=0.9\textwidth]{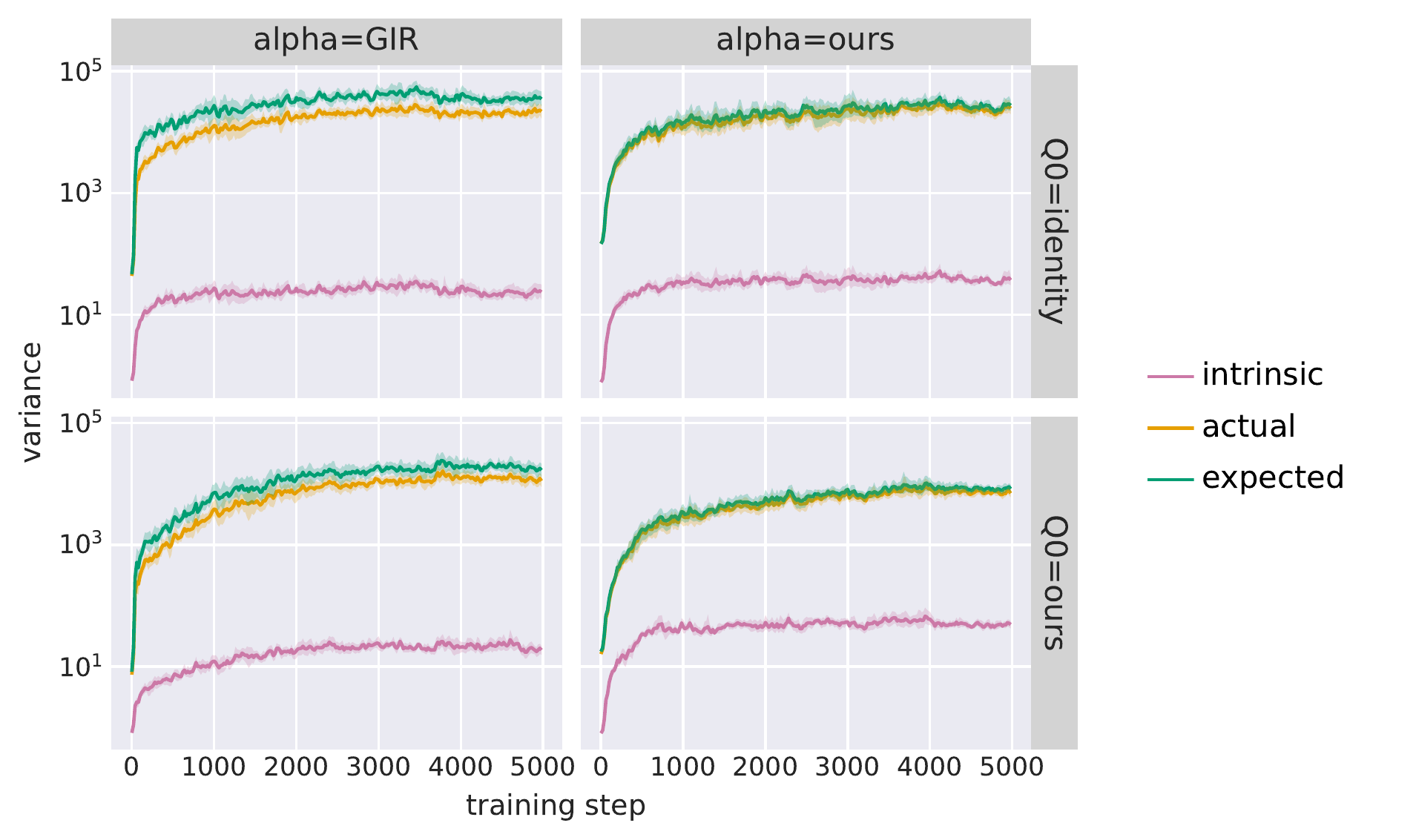}
    \caption{
    Theoretical predictions and empirical measurements
    of quantities contributing to total gradient variance.
    The ``intrinsic'' variance measures the expected norm of the total gradient $\dldw$,
    estimated by averaging across the minibatch.
    The ``expected'' variance is a theoretical prediction of $V(Q)$ according to Equation \ref{eqn:structured_vq}.
    The ``actual'' variance measures $V(Q)$ empirically by the expected norm of the total gradient estimate.
    }
    \label{fig:seqmnist_variances}
\end{figure}

In order to test our variance analysis,
we show in Figure \ref{fig:seqmnist_variances} predictions and measurements of several quantities that contribute to the variance,
recorded during optimization.
Recall from Section \ref{sec:total_variance} that the variance of the total
gradient estimate takes the form
\[
\Var \Bigl[ \sum_{t \leqslant T} \dltdht{t} \htilde_t \wtilde_t^{\top} \Bigr] = V(Q) + \bigl\| \dldw \bigr\|^2.
\]
The \textsf{actual} variance in Figure \ref{fig:seqmnist_variances} measures $V(Q)$ empirically by computing
\[
\E \Bigl[ \bigl\| \sum_{t \leqslant T} \dltdht{t} \htilde_t \wtilde_{t-1}^\top - \dldw \bigr\|^2 - \bigl\| \dldw \bigr\|^2 \Bigr],
\]
where the expectation is estimated by averaging across the minibatch.
The \textsf{intrinsic} variance is similarly computed as $\E \bigl[ \| \dldw \|^2 \bigr]$.
The \textsf{expected} variance measures
the theoretical prediction of $V(Q)$ by plugging the corresponding choice of $Q_0$ into Equation \ref{eqn:structured_vq}.

We see that the theoretical predictions of $V(Q)$ are correct when \textsf{alpha=ours},
but that they overestimate $V(Q)$ when \textsf{alpha=GIR}.
When we derived $V(Q)$ in Section \ref{sec:total_variance},
we started with the assumption that $Q$ and $u$ be independent;
this assumption is violated by the \gir coefficients, which depend on the noise $u$.
Finally, we see that our proposals indeed reduce the \textsf{actual} variance;
significantly so when both \textsf{Q0=ours, alpha=ours}.

We furthermore highlight in Figure \ref{fig:seqmnist_logalphas} the difference in behavior of the $\alpha$ coefficients under the four configurations.
The \gir coefficients appear to take on more extreme values, especially early on in training.
Presumably, poor initialization causes increased levels of gradient vanishing,
which subsequently causes $\gamma_s$ to be large in order to compensate.
However, when we combine the \gir coefficients with our choice of $Q_0$,
the effect is exacerbated.
This may be because the \gir coefficients and our $Q_0$ optimize for conflicting objectives.
Curiously, when both \textsf{Q0=ours, alpha=ours}, the relative ordering of the coefficients is reversed,
so that $\alpha_s < \alpha_t$ for $s < t$.

\begin{figure}
    \centering
    \includegraphics[width=0.9\textwidth]{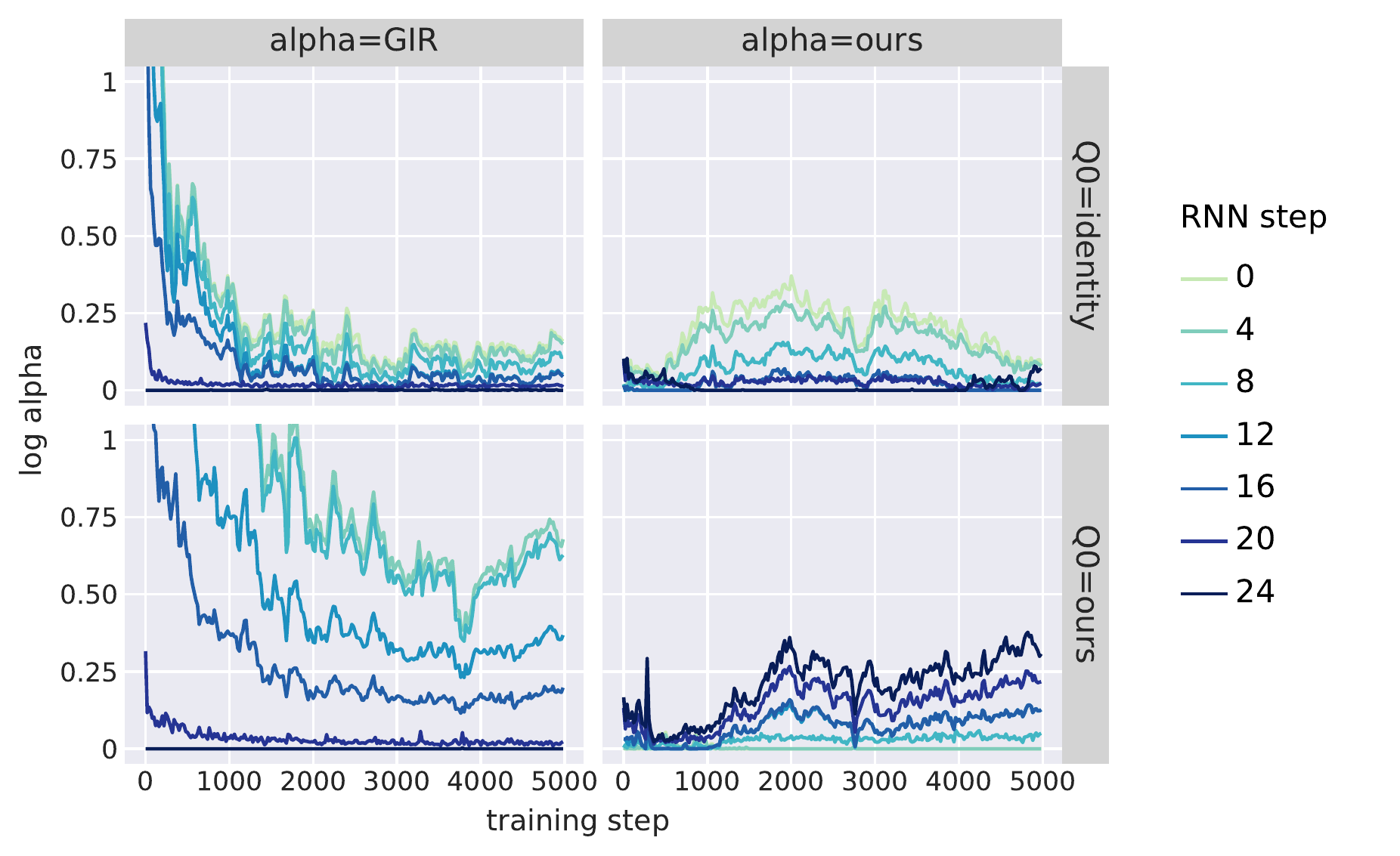}
    \caption{
    Evolution of $\log \alpha_s$ for some time steps $s$ as training proceeds.
    At each training step, the $\log \alpha_s$ are centered so that $\min_s \log \alpha_s = 0$;
    this eliminates irrelevant constant factors.
    }
    \label{fig:seqmnist_logalphas}
\end{figure}

\section{Projection in the Space of Preactivations} \label{sec:preuoro}

\newcommand{\dhdpre}{\mathcal{J}^{h_t}_{W_t a_t}}
\newcommand{\thisalgorithm}{\preuoro\xspace}
\newcommand{\thatalgorithm}{\uoro\xspace}

Recall from Section \ref{sec:uoro} how the spatial rank-one approximation breaks down the Jacobian
$\dhtdwt{t} = \dhtdzt{t} \nu_t \nu_t^{\top} \dztdwt{t}$
into more manageable quantities
$\dhtdzt{t} \nu_t$ and $\nu_t^{\top} \dztdwt{t}$ by
projecting in the space of some cut vertex $z_t$.
Assuming the transition function $F$ takes the form given in Equation~\ref{eqn:F_standard_form},
we observe that the Jacobian
can be factored as $\dhtdwt{t} = \dhdpre (I \otimes a_t^{\top})$ where $\otimes$ denotes the Kronecker product,
i.e. it is already rank-one. By choosing $z_t$ to be the
preactivations $W_t a_t$, we can avoid the projection, and we obtain
the following recursion:
\begin{align}
  \htilde_t &= \gamma_t \dhtdhs{t}{t-1} \htilde_{t - 1} + \beta_t \tau_t \dhdpre
   \label{eqn:preuoro_recursion} \\
  \wtilde_t &= \gamma_t^{-1} \wtilde_{t - 1} + \beta_t^{-1} \tau_t a_t \notag
\end{align}
The vector-valued $\htilde_t$ has been replaced by a matrix
$\htilde_t$, and the contributions $\dhdpre$ and $a_t$
are multiplied by scalar noise $\tau_s \sim \mathcal{N}(0,1)$ rather than projected down.
At each step,
the gradient contribution $\dltdw{t}$ is computed as $\vectorized (( \dltdht{t} \htilde_t)^{\top} \wtilde_t^{\top})$.
The \gir coefficients
\[
\gamma_t^2 = \inlinefrac{\|\wtilde_{t - 1}\|}{\| \dhtdhs{t}{t-1} \htilde_{t - 1}\|_F}
\mbox{,\quad\quad}
\beta_t^2 = \inlinefrac{\| a_t \|}{\|\dhdpre \|_F}
\]
can be derived like in Section~\ref{sec:uoro}.
We will refer to this variant of \thatalgorithm as ``\thisalgorithm''.
This algorithm has also been discovered by \citet{mujika2018approximating}.

Define $b^{(t)}_s = \dltdzs{t}{s}$, the gradient of the loss at time $t$ with respect to the projection variable at time $s$.
Then the total gradient $\dldw$ can be expressed as
\begin{equation*}
\dldw = \sum_{t \leqslant T} \dltdw{t}
  = \sum_{t \leqslant T} \sum_{s \leqslant t} b_s^{(t) \top} (I \otimes a_s^{\top})
  = \sum_{t \leqslant T} \sum_{s \leqslant t} \vectorized ( b_s^{(t)} a_s^\top ),
\end{equation*}
where $\vectorized$ is the vectorization operator that serializes its matrix argument into a row vector in row-major order.
We can express the total gradient estimate
as
\begin{align}
\vectorized \Bigl( \sum_{t \leqslant T} (\dltdht{t} \htilde_t)^{\top} \wtilde_t^{\top} \Bigr)
&= \vectorized \Bigl( \sum_{t \leqslant T}
\bigl( \sum_{s \leqslant t} \tau_s \alpha_s b_s^{(t)} \bigr)
\bigl( \sum_{r \leqslant t} \tau_r \alpha_r^{-1} a_r^{\top} \bigr)
\Bigr)
\notag
\\&= \vectorized \Bigl( \sum_{t \leqslant T}
\bar{B}^{(t) \top} \bar{Q} \tau \tau^{\top} \bar{Q}^{-1} \bar{S}^{(t)} \bar{J}
\Bigr) , \label{eqn:preuoro_total_gradient_estimate}
\end{align}
where we have defined the matrices
\[
\bar{B}^{(t) \top} = \begin{pmatrix}
b_1^{\smash[t]{(t)}} & \cdots & b_T^{\smash[t]{(t)}}
\end{pmatrix},
\bar{Q} = \diag(\alpha),
\bar{S}^{(t)}_{i j} = \delta_{i j} \indicator{i \geqslant t},
\bar{J} = \begin{pmatrix}
a_1 & \cdots & a_T
\end{pmatrix}^\top .
\]
that mirror similarly-named quantities from Section \ref{sec:total_gradient_estimate}.
The expression in Equation \ref{eqn:preuoro_total_gradient_estimate} is analogous to that in Equation \ref{eqn:total_gradient_estimate},
but with the crucial difference that no summation across space is involved.
Hence the noise vector $\tau$ has much smaller dimension $T$ rather than $T N$
(with $N$ being the dimension of the projection space).

We show in Appendix \ref{sec:preuoro_variance_comparison} that the variance contribution $V(Q)$ of \thisalgorithm can be written
\[
\sum_{s \leqslant T} \sum_{t \leqslant T}
\trace \bigl( \bar{B}^{(s)} \bar{B}^{(t)\top} \bar{Q} \bar{Q}^\top \bigr)
\trace \bigl( \bar{S}^{(t)} \bar{J} \bar{J}^\top \bar{S}^{(s)} (\bar{Q} \bar{Q}^\top)^{-1} \bigr)
\]
and the variance contribution $V(Q)$ of \thatalgorithm's total gradient estimate from Section \ref{sec:total_gradient_estimate}
(Equation \ref{eqn:total_gradient_estimate}) can be written:
\[
\sum_{s \leqslant T} \sum_{t \leqslant T}
\trace \bigl( \bar{B}^{(t)} Q_0 Q_0^\top \bar{B}^{(s)\top} \bar{Q} \bar{Q}^\top \bigr)
\trace \bigl( \bar{S}^{(t)} \bar{J} \bar{J}^\top \bar{S}^{(s)} (\bar{Q} \bar{Q}^\top)^{-1} \bigr)
\trace \bigl( (Q_0 Q_0^\top)^{-1} \bigr)
\]
The latter has an extra factor $\trace ((Q_0 Q_0^\top)^{-1})$.
If $Q_0 = I$, then this factor is equal to $tr(I)$.
Spatial projection thus causes the dominant term of the variance
to be multiplied by the dimension of the preactivations,
which typically ranges in the thousands.
Avoiding the spatial projection avoids this multiplication and
hence achieves drastically lower variance.

\begin{figure}
    \centering
    \includegraphics[width=0.95\textwidth]{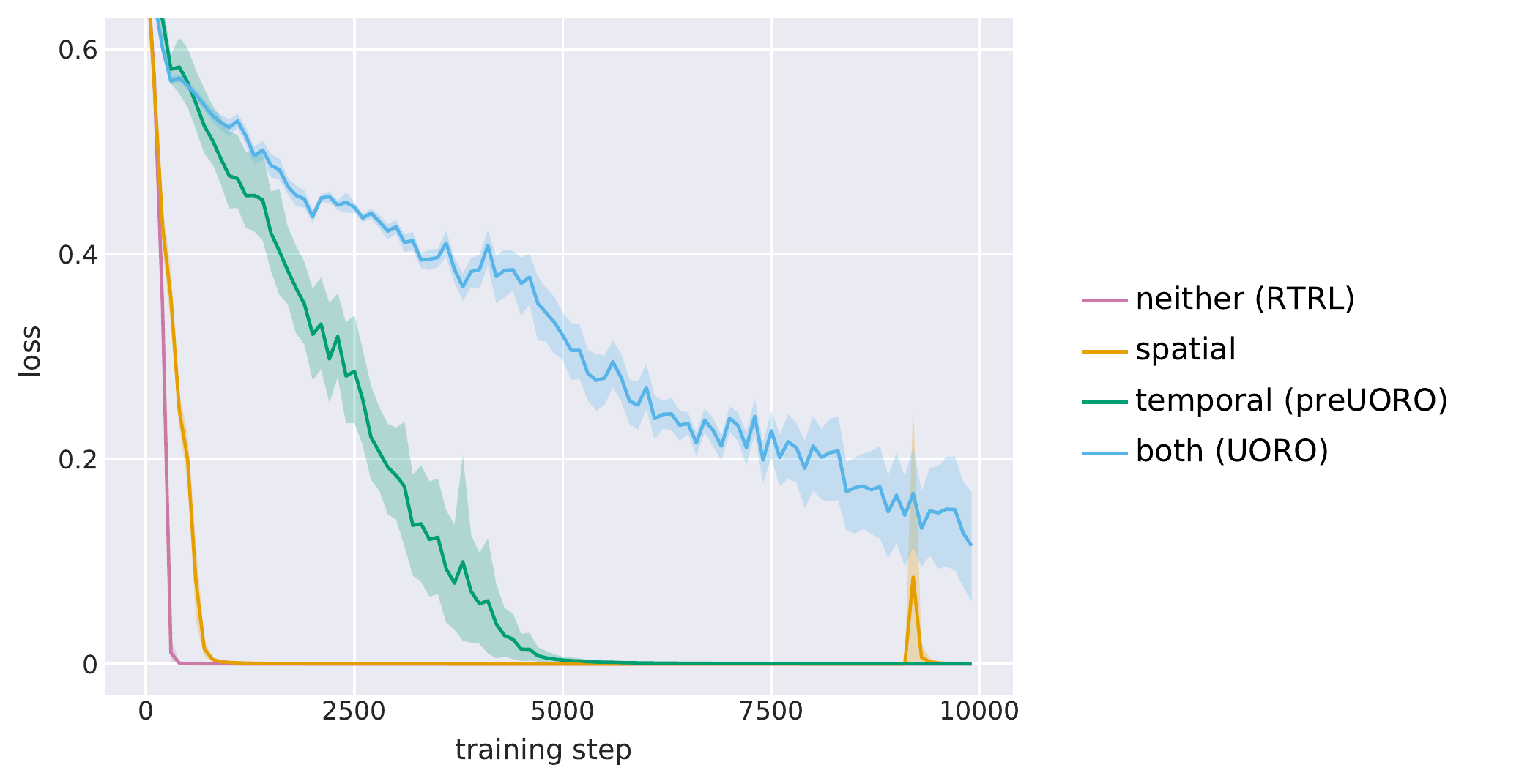}
    \caption{
    Training curves on the queue task showing interpolation
    between \rtrl and \uoro by ablation of the spatial and temporal approximations.
    ``\textsf{neither}'' denotes exact computation of the gradient using \rtrl,
    ``\textsf{spatial}'' denotes \rtrl with $\dhtdzt{t} \nu_t \nu_t^\top \dztdwt{t}$ standing in for $\dhtdwt{t}$,
    ``\textsf{temporal}'' denotes \thisalgorithm computed by Equation \ref{eqn:preuoro_recursion},
    ``\textsf{both}'' denotes \thatalgorithm.
    Where applicable, the cut vertex $z_t \equiv W_t a_t$ is the preactivations.
    }
    \label{fig:preuoro_ablation}
\end{figure}

Figure \ref{fig:preuoro_ablation} confirms the corresponding improvement in optimization performance.
This figure shows training curves of four variations on \rtrl:
\rtrl,
\rtrl plus spatial projection (\thatalgorithm minus temporal projection),
\thisalgorithm (\thatalgorithm minus spatial projection), and
\thatalgorithm which performs both spatial and temporal projection.
The task under consideration is the queue task,
in which the model is trained to emit its input stream with a delay.
Effectively, the model learns to implement a queue.

The model is similar to that described in \ref{sec:q_experiments}, except with 50 hidden units.
The model observes a random binary input stream and has to predict a binary output stream
that is equal to the input stream but with a delay of 4 time steps.
The $\dltdw{t}$ estimates are averaged across a minibatch of 100 examples,
and applied to the parameters by Adam \citep{kingma2014adam} with momentum 0.5 and learning rate set to
0.008 for ``\textsf{neither}'',
0.008 for ``\textsf{spatial}'',
0.0008 for ``\textsf{temporal}'',
0.002 for ``\textsf{both}'' (found by grid search).

The main drawback of this method is its computational complexity:
the algorithm involves propagating multiple vectors forward,
which increases the computation time by the same factor
$N$ that we removed from the variance.
The dominant operation is the matrix-matrix
multiplication $\dhtdhs{t}{t-1} \htilde_{t - 1}$, which has
computational cost $\bigO (N^3)$ (recall $N$ is the dimension of the projection space). This is better than \rtrl's
$\dhtdhs{t}{t-1} \dhtdw{t-1}$ which costs $\bigO
(N^4)$, but worse than \uoro and \bptt which propagate vectors at a cost of $\bigO
(N^2)$. The space complexity is $\bigO (N^2)$, which matches that of \uoro.

\section{\reinforce as Approximate Real-Time Recurrent Learning} \label{sec:reinforce}

In this section we show a fundamental connection between \reinforce~\citep{williams1992simple} and \uoro.
The \reinforce algorithm provides gradient estimates for systems with
stochastic transitions. It can also be used to train recurrent neural networks
if we artificially induce stochasticity by adding Gaussian noise to the hidden
states.
We will show that in this
setting, the \reinforce estimator is closely related to the \uoro estimator.

\reinforce aims to estimate the gradient of the expected loss
$\E_{\chi \sim p (\chi ; \theta)} [L (\chi)]$ which depends on the
parameter $\theta$ through some distribution $p (\chi ; \theta)$ over
stochastic context $\chi$ that determines the loss $L (\chi)$. Conceptually, $\chi = (\chi_t)$ is
the trajectory of the state of an agent and its external environment, and $\theta$ parameterizes a
stochastic policy over actions, which induces a distribution $p (\chi ;
\theta)$ on $\chi$. 

The gradient of the expected loss can be rewritten as an
expected gradient as follows:
\begin{align*}
  \nabla_{\theta} \E_{\chi \sim p (\chi ; \theta)} [L (\chi)]
  = \nabla_{\theta} \int L (\chi) p (\chi ; \theta) d \chi
  &= \int L (\chi) \nabla_{\theta} p (\chi ; \theta) d \chi \\
  &= \int L (\chi) \nabla_{\theta} (\log p (\chi ; \theta)) p (\chi ; \theta) d \chi ,
\end{align*}
where we have used the fact that $\nabla_{\theta} \log p (\chi ; \theta)
= \inlinefrac{\nabla_{\theta} p (\chi ; \theta)}{p (\chi ; \theta)}$. With this
modified expression, we can estimate $\nabla_{\theta} \E_{\chi
\sim p (\chi ; \theta)} [L (\chi)]$ by sampling from $p (\chi ; \theta)$.

In our case, $\chi$ will be the trajectory of the stochastic hidden states of the \rnn,
and sampling from $p(\chi; \theta)$ will correspond to the following recursions:
\begin{align}
  h_t &= F (\bar{h}_{t-1}, x_t ; \theta_t) \notag \\
  \bar{h}_t &= h_t + \sigma u_t , \label{eqn:stochastic_dynamics}
\end{align}
with additive Gaussian noise $u_t \sim \mathcal{N}(0, I)$.
The stochastic hidden state $\bar{h}_t$ is effectively sampled from a state transition policy
$p (\bar{h}_t | \bar{h}_{t - 1}, \theta_t) \propto \exp \left( - \frac{1}{2 \sigma^2} \| \bar{h}_t - h_t \|^2 \right)$.

For each state $\bar{h}_t$ so visited,
we compute the score
$\nabla_{\theta} \log p (\bar{h}_{\leqslant t} ; \theta)$
of the trajectory $\bar{h}_{\leqslant t} = (\bar{h}_0, \bar{h}_1, \dots, \bar{h}_t)$
that brought us there, and
multiply it by an immediate loss $L_t$ so obtained.
Intuitively, higher
rewards ``reinforce'' directions in parameter space that bring them about.
We will assume $L_t$ is a differentiable function of $\bar{h}_t$.

By the chain rule of probability, the score
$\nabla_{\theta} \log p (\bar{h}_{\leqslant t} ; \theta)$
of the trajectory is simply the sum
 $\nabla_{\theta} \sum_{s=1}^t \log p (\bar{h}_s | \bar{h}_{s-1}, \theta_s )$,
which we can recursively maintain according to
\begin{align*}
  \bar{w}_t^{\top} & = \bar{w}_{t - 1}^{\top} +\nabla_{\theta} \log p
  (\bar{h}_t | \bar{h}_{t - 1}, \theta_t )
  = \bar{w}_{t - 1}^{\top} - \frac{1}{2 \sigma^2} \mathcal{J}_{h_t}^{\|
  \bar{h}_t - h_t \|^2} \dhtdwt{t}\\
  & = \bar{w}_{t - 1}^{\top} + \frac{1}{\sigma^2} (\bar{h}_t - h_t)^{\top}
  \dhtdwt{t}
  = \bar{w}_{t - 1}^{\top} + \frac{1}{\sigma} u_t^{\top} \dhtdwt{t}.
\end{align*}
Note that in the above computations, ``$\bar{h}_t$'' and ``$\bar{h}_{t-1}$'' are not the variables themselves but particular values. (This is a consequence of our adoption of the standard abuse of notation for random variables.) Thus they are treated as constants with respect to differentiation. The only quantity that depends on $\theta$ is $h_t$, which when we condition on the value of $\bar{h}_{t-1}$, only depends on $\theta$ via $\theta_t$.

This recursion is very similar to \uoro's recursion for $\wtilde_t^\top$, and it computes a similar type of sum: \begin{equation}
\label{eqn:barw_sum_form}
\bar{w}_t^{\top} = \frac{1}{\sigma} \sum_{s\leqslant t} u_s^{\top} \dhtdwt{s}.
\end{equation}

Once we have $\nabla_{\theta} \log p (\bar{h}_{\leqslant t} ; \theta)$, we need to
multiply it by the loss $L_t$ to obtain a \reinforce gradient estimate of
$\dltdw{t}$.
We can express the loss by its Taylor series around the point $u = 0$ where the noise is zero, as follows:
\begin{align*}
  L_t & = \left. L_t \right|_{u=0}
            + \Bigl( \sum_{s \leqslant t} \left. \jac{L_t}{u_s} \right|_{u=0} u_s \Bigr)
            + \frac{1}{2} \Bigl( \sum_{r \leqslant t} \sum_{s \leqslant t} u_r^{\top} \left.\mathcal{H}^{L_t}_{u_r, u_s} \right|_{u=0} u_s \Bigr)
            + \cdots \\
          & = \left. L_t \right|_{u=0} + \sigma \Bigl( \sum_{s \leqslant t} \left. \dltdhs{t}{s} \right|_{u=0} u_s \Bigr) + \bigO (\sigma^2) ,
\end{align*}
where $\mathcal{H}^{L_t}_{u_r, u_s}$ denotes the Hessian of $L_t$ with respect to $u_r$ and $u_s$.
The last step uses the fact that $\sigma u_s$ affects $L_t$ in exactly the same way that $h_s$ does,
so that $\jac{L_t}{u_s} = \sigma \dltdhs{t}{s}$ and $\mathcal{H}^{L_t}_{u_r, u_s} = \sigma^2 \mathcal{H}^{L_t}_{h_r, h_s}$.

Plugging the Taylor series for $L_t$ into the \reinforce gradient
estimate and using Equation \ref{eqn:barw_sum_form}, we get:
\begin{equation}
  L_t \bar{w}_t^{\top}
  = \left. L_t \right|_{u=0} \bar{w}_t^{\top}
        + \Bigl(\sum_{s \leqslant t} \left. \dltdhs{t}{s} \right|_{u=0} u_s \Bigr)
          \Bigl(\sum_{s \leqslant t} u_s^{\top} \dhtdwt{s} \Bigr) + \bigO (\sigma) .
          \notag \label{eqn:REINFORCE_UORO_link}
\end{equation}
Here we see the \uoro gradient estimator appear in the second term, but with
an important difference: the $\dhtdwt{s}$'s are evaluated in
the noisy system, whereas the $\left. \dltdhs{t}{s} \right|_{u=0}$ are evaluated with
zero noise. Thus this term doesn't estimate $\dltdw{t}$ for any value of $u$. However, the
equivalence becomes exact when we let the noise go to zero by taking the limit
$\sigma \rightarrow 0$.

To see this we first observe that letting $\sigma$ go to $0$ is equivalent to letting $u$ go to $0$ in the recursions for $h_s$ (Equation \ref{eqn:stochastic_dynamics}).
Furthermore, since $F$ is continuously differentiable, so is $h_s$ (w.r.t. all of its dependencies).
Therefore $\dhtdwt{s}$ is a continuous function of $u$, and it follows that
\begin{align*}
\lim_{\sigma \rightarrow 0} \dhtdwt{s} = \lim_{u \rightarrow 0} \dhtdwt{s} = \left. \dhtdwt{s} \right |_{u=0} .
\end{align*}
And therefore we have 
\begin{equation*}
\lim_{\sigma \rightarrow 0} \Bigl[
\Bigl( \sum_{s \leqslant t} \left. \dltdhs{t}{s} \right|_{u=0} u_s \Bigr)
\Bigl( \sum_{s \leqslant t} u_s^{\top} \dhtdwt{s} \Bigr)
+ \bigO (\sigma) \Bigr]
= \Bigl(\sum_{s \leqslant t} \left. \dltdhs{t}{s} \right|_{u=0} u_s \Bigr)
  \Bigl(\sum_{s \leqslant t} u_s^{\top} \left. \dhtdwt{s} \right|_{u=0} \Bigr) ,
\end{equation*}
which is identical to the standard \uoro estimate
$\dltdht{t} \htilde_t \wtilde_t^{\top}$
of
$\dltdw{t}$ (without any variance reduction).

Thus we can see that in the limit as $\sigma \rightarrow 0$, \reinforce becomes
equivalent to \uoro (sans variance reduction), except that it includes the additional term:
\begin{equation*}
\left. L_t \right|_{u=0} \bar{w}_t^{\top} = \frac{1}{\sigma} \left. L_t \right|_{u=0} \sum_{s\leqslant t} u_s^{\top} \dhtdwt{s} .
\end{equation*}
From the RHS expression we see that this term has mean zero, and thus the limiting behavior of \reinforce is to give an unbiased estimate of the gradient of the noise-free model. However, the variance of the additional term goes to infinity as $\sigma \rightarrow 0$.
For models where the noise is bounded away from zero this term represents the main source of variance for \reinforce estimators. It can however be addressed by subtracting an estimate of $\left. L_t \right|_{u=0}$ from $L_t$ before multiplying by the score function. This is known as a ``baseline'' in the \reinforce literature \citep{williams1992simple}.

The appearance of the \uoro estimator as part of the \reinforce estimator
suggests an additional opportunity for variance reduction in \reinforce.
If in Equation \ref{eqn:stochastic_dynamics} we had instead defined
\[
\bar{h}_t = h_t + \sigma Q_t u_t,
\]
that is, the noise added to $h_t$ has covariance $\sigma^2 Q_t^\top Q_t$,
then we would have found
\[
\bar{w}_t^{\top} = \frac{1}{\sigma} \sum_{s \leqslant t} u_s^{\top} Q^{-1} \dhtdwt{s}
\mbox{\quad and \quad }
\sum_{s \leqslant t} \left. \jac{L_t}{u_s} \right|_{u=0} = \sigma \sum_{s \leqslant t} \left. \jac{L_t}{h_s} \right|_{u=0} Q_s u_s.
\]
Putting these two together as in Equation \ref{eqn:REINFORCE_UORO_link} and
passing to the limit $\sigma \rightarrow 0$ as before, we get
\[
\lim_{\sigma \rightarrow 0} L_t \bar{w}_t^\top =
\left. L_t \right|_{u=0} \bar{w}_t^\top 
+ \Bigl(\sum_{s \leqslant t} \left. \dltdhs{t}{s} \right|_{u=0} Q_s u_s \Bigr)
  \Bigl(\sum_{s \leqslant t} u_s^{\top} Q_s^{-1} \left. \dhtdwt{s} \right|_{u=0} \Bigr),
\]
where now the second term is identical to \uoro \emph{with} the generalized
variance reduction described in Section \ref{sec:generalized_recursions}.
Thus the $Q_s$ matrices that enable variance reduction in \uoro
correspond directly to a choice of covariance on the exploration noise in \reinforce.

\section{Conclusions} \label{sec:conclusion}

We have contributed a thorough analysis of \uoro-style approximate differentiation algorithms
and their variance behavior.
The theory takes a holistic view of the algorithm as part of an optimization process,
where the sequence of mutually dependent gradient estimates
$\dltdw{t} \approx \dltdht{t} \htilde_t \wtilde_t^\top$
produced by \uoro are accumulated as per gradient descent.
Our analysis considers the variance of this total gradient estimate.
This is in contrast to \uoro's variance reduction scheme (\gir)
which minimizes the variance of individual Jacobian estimates
$\dhtdw{t} \approx \htilde_t \wtilde_t^\top$,
without accounting for the way in which they are used.
We have developed a generalization of \gir,
and suggested avenues toward a practical implementation.
Empirical evaluation confirms our theoretical claims.

Furthermore we have described an variation on \uoro that avoids ``spatial'' projection,
greatly reducing the variance at the cost of increased computational complexity.
Finally, we have drawn a deep connection between \uoro and \reinforce when the latter
is used to train an \rnn with perturbed hidden states.

\acks{%
The authors thank Max Jaderberg, David Sussillo, David Duvenaud and Aaron Courville for helpful discussion,
and Chris Maddison and Grzegorz Swirszcz for reviewing drafts of this paper.
This research was enabled by computational resources courtesy of Compute Canada.
}

\newpage
\appendix

\section{Supporting Results for Variance Computations} \label{sec:secondmoment}

In this section we prove several technical results supporting our variance computations in the main text.

\begin{definition}[Standard random vector]
A standard random vector is any real vector $u$ whose elements $u_i$ are drawn iid
from a distribution that is symmetric around zero and has unit variance.
\end{definition}

Standard random vectors $u$ satisfy $\E [u] = 0$ and $\E [u u^\top] = I$,
which is required for our algorithms to be unbiased.
Moreover, by symmetry the odd moments of their elements $u_i$ are zero.
The results below will involve the ``excess kurtosis'' $\E [u_1^4] - 3$
of the distribution of the elements of $u$.
The standard normal distribution $\mathcal{N}(0, 1)$ has excess kurtosis 0,
whereas the uniform distribution on signs $\mathcal{U}\{-1,+1\}$ has excess kurtosis -2.

\begin{proposition} \label{prop:secondmoment}
Suppose $A, B, C, D$ are constant matrices, and $u$ is a standard random vector
with excess kurtosis $\kappa$.
Then we have
\begin{align*}
    \E [ A u u^{\top} B C u u^{\top} D ] =
    \trace (B C) A D
    + 2 A B C D
    + \kappa A ((B C) \odot I) D.
\end{align*}
\end{proposition}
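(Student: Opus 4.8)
The plan is to compute the $(p,q)$ entry of the matrix $\E[A u u^\top B C u u^\top D]$ directly by expanding everything into scalar index sums, using the fact that the only surviving terms are those whose noise factors pair up into products of second moments (or a single fourth moment). First I would write
\[
\bigl( A u u^\top B C u u^\top D \bigr)_{pq}
= \sum_{i,j,k,l} A_{pi}\, u_i u_j\, (BC)_{jk}\, u_k u_l\, D_{lq},
\]
so that taking the expectation reduces the problem to evaluating $\E[u_i u_j u_k u_l]$. Since $u$ is a standard random vector, its elements are iid, symmetric about zero, and have unit variance; hence $\E[u_i u_j u_k u_l]$ vanishes unless the indices pair up, and the nonzero cases are governed entirely by the four-index moment structure.

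The key combinatorial step is to enumerate the index-pairing patterns. For distinct indices the expectation factorizes into second moments, giving $\delta_{ij}\delta_{kl} + \delta_{ik}\delta_{jl} + \delta_{il}\delta_{jk}$ (each pairing contributing $\E[u_i^2]\E[u_k^2]=1$), and the case $i=j=k=l$ is counted by all three Kronecker patterns simultaneously but carries the fourth moment $\E[u_1^4]$ rather than $1$. Writing $\E[u_1^4] = 3 + \kappa$, the coincident case contributes an extra $\kappa$ beyond what the three pairings already supply. Thus
\[
\E[u_i u_j u_k u_l] = \delta_{ij}\delta_{kl} + \delta_{ik}\delta_{jl} + \delta_{il}\delta_{jk} + \kappa\, \delta_{ij}\delta_{ik}\delta_{il}.
\]
Substituting each of the four terms back into the index sum and recognizing the resulting contractions is then routine: the $\delta_{ij}\delta_{kl}$ term produces $A_{pi}(BC)_{ik}D_{kq}$ summed over $i,k$, which after summing the trace over $j=i$ and $l=k$ yields $\trace(BC)\,(AD)_{pq}$; the $\delta_{ik}\delta_{jl}$ term yields $(ABCD)_{pq}$; the $\delta_{il}\delta_{jk}$ term likewise yields $(ABCD)_{pq}$, accounting for the factor $2$; and the $\kappa\,\delta_{ij}\delta_{ik}\delta_{il}$ term forces all four indices equal, giving $\kappa\sum_i A_{pi}(BC)_{ii}D_{iq} = \kappa\,\bigl(A((BC)\odot I)D\bigr)_{pq}$.

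The main obstacle is getting the combinatorial bookkeeping of the fourth-moment formula exactly right, particularly the subtraction of $3$ in the excess kurtosis: one must be careful that the three distinct pairings already account for the coincident-index case as if it had fourth moment $3$, so that only the \emph{excess} $\kappa = \E[u_1^4]-3$ appears as a correction. Once the moment formula is fixed, collecting the four contributions into the claimed matrix expression is a mechanical matter of matching index contractions to matrix products and to the Hadamard-with-identity $(BC)\odot I$, which extracts precisely the diagonal of $BC$.
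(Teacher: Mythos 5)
Your overall strategy coincides with the paper's own proof: expand the $(p,q)$ entry into a scalar sum, evaluate $\E[u_i u_j u_k u_l]$ via independence and symmetry, and recombine the surviving index patterns into matrix form. Your moment identity $\E[u_i u_j u_k u_l] = \delta_{ij}\delta_{kl} + \delta_{ik}\delta_{jl} + \delta_{il}\delta_{jk} + \kappa\,\delta_{ij}\delta_{ik}\delta_{il}$ and your handling of the excess-kurtosis correction are both correct. The genuine gap is in the final bookkeeping, where you match pairings to contractions: all three assignments are wrong. In your expansion $\sum_{i,j,k,l} A_{pi}\,u_i u_j\,(BC)_{jk}\,u_k u_l\,D_{lq}$, the pairing $\delta_{ij}\delta_{kl}$ gives $\sum_{i,k} A_{pi}(BC)_{ik}D_{kq} = (ABCD)_{pq}$ --- you write this index sum correctly and then misname it $\trace(BC)(AD)_{pq}$, even though no index closes $BC$ on itself. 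The trace term actually arises from $\delta_{il}\delta_{jk}$, where $j=k$ contracts $BC$ into its trace while $i=l$ joins $A$ to $D$. Most importantly, the pairing $\delta_{ik}\delta_{jl}$ yields $\sum_{i,j} A_{pi}(BC)_{ji}D_{jq} = \bigl(A(BC)^{\top}D\bigr)_{pq}$, which carries a transpose and is \emph{not} a second copy of $(ABCD)_{pq}$; so the claimed factor of $2$ does not materialize. Carried out correctly, your method proves
\[
\E[A u u^{\top} B C u u^{\top} D]
= \trace(BC)\,AD + ABCD + A(BC)^{\top}D + \kappa\, A\bigl((BC)\odot I\bigr)D,
\]
which agrees with the proposition only when $BC$ is symmetric. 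A quick sanity check: with $A=D=I$, $BC = e_1 e_2^{\top}$ and Gaussian $u$ in two dimensions, one finds $\E[u u^{\top} e_1 e_2^{\top} u u^{\top}] = e_1 e_2^{\top} + e_2 e_1^{\top}$, not $2\, e_1 e_2^{\top}$.

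You should also know that the paper's own proof commits the same transposition at the very end: its case table correctly records the contraction $(BC)_{kj}$ for the pattern $j = l \neq k = m$, but the subsequent ``casting back into matrix form'' silently replaces $A(BC)^{\top}D$ by $ABCD$. Your compensating mislabelings thus land exactly on the formula as stated, but neither your derivation nor the stated identity holds term by term for general $BC$ --- and the distinction is material downstream, since Corollary \ref{cor:variance} applies the proposition with $BC = x y^{\top}$, which is not symmetric. To repair your proof, fix the three contraction assignments as above; what you will then have proved is the corrected identity displayed here, with the proposition as stated recovered under the additional hypothesis that $BC$ is symmetric (or after symmetrization over the symmetric double sums in which the result is ultimately used).
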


\begin{proof}

By linearity of expectation,
\begin{equation*}
  \E [ A u u^{\top} B C u u^{\top} D ] =
  \sum_{j k l m} \E [ u_j u_k u_l u_m ] A_{i j} (B C)_{k l} D_{m n}.
\end{equation*}
In order to evaluate the expectation $\E [ u_j u_k u_l u_m ]$, we make use
of the fact that $u_i$ and $u_j$ are independent unless $i = j$.
This allows us to express the product inside the expectation
as a product of powers $u_i^{p(i)}$, with the power $p(i)$ equal to the multiplicity of $i$ in $(j, k, l, m)$.
By independence, the expectation of this product then factors into a product $\prod_i \E [ u_i^{p(i)} ] = \prod_i \mu_{p(i)}$ of moments $\mu_p \triangleq \E [ u_1^p ]$ of the elements $u_i$.
Moreover, since by symmetry the odd moments of $u_i$ are zero, we need only consider
cases in which all indices have even multiplicity.
Thus we get
\begin{equation*}
  \sum_{j k l m} \E [ u_j u_k u_l u_m ] A_{i j} (B C)_{k l} D_{m n}
  = \sum_{j k l m} \begin{dcases*}
    \mu_4   A_{i j} (B C)_{j j} D_{j n} & if $j = k = l = m$ \\
    \mu_2^2 A_{i j} (B C)_{j l} D_{l n} & if $j = k \neq l = m$ \\
    \mu_2^2 A_{i j} (B C)_{k j} D_{k n} & if $j = l \neq k = m$ \\
    \mu_2^2 A_{i j} (B C)_{k k} D_{j n} & if $j = m \neq k = l$ \\
    0 & else
  \end{dcases*}.
\end{equation*}

Casting this back into matrix form, we have
\begin{align*}
  \E [ A u u^{\top} B C u u^{\top} D ] =~ &
    \mu_2^2 \trace (B C) A D
    + 2 \mu_2^2 A B C D
    + (\mu_4 - 3 \mu_2^2) A ((B C) \odot I) D
    \\ =~ &
    \trace (B C) A D
    + 2 A B C D
    + \kappa A ((B C) \odot I) D,
\end{align*}
where $\mu_2^2 = 1$ follows from the fact that $u$ is a standard random vector,
and $\mu_4 - 3 \mu_2^2 = \mu_4 - 3 = \kappa$
is its excess kurtosis.

\end{proof}

\begin{corollary}
\label{cor:variance}
Suppose $x$ and $y$ are constant vectors, $V$ and $W$ are constant matrices,
and $u$ is a standard random vector with excess kurtosis $\kappa$.
Then
\begin{equation*}
    \Cov [ x^\top u u^\top V, y^\top u u^\top W ] =
    (x^\top y) V^{\top} W + V^{\top} x y^{\top} W
    + \kappa V^\top ((x y^\top) \odot I) W
\end{equation*}
and
\begin{align*}
\trace \bigl( \Cov [ x^\top u u^\top V, y^\top u u^\top W ] \bigr)
=\;& (x^\top y) \trace (V^{\top} W) + 2 \trace (V^{\top} x y^{\top} W)
\\    &+ \kappa \trace \bigl( V^\top ((x y^\top) \odot I) W \bigr)
.
\end{align*}

\end{corollary}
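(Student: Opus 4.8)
The plan is to treat the two quantities as (row) vectors $p \triangleq x^\top u u^\top V$ and $q \triangleq y^\top u u^\top W$, and to compute their covariance through the standard decomposition $\Cov[p, q] = \E[p^\top q] - \E[p]^\top \E[q]$. First I would rewrite the second-moment term so that it matches the canonical form handled by Proposition~\ref{prop:secondmoment}. Because $x^\top u$ and $y^\top u$ are scalars, the scalar factors can be freely collected, giving $p^\top q = V^\top u u^\top (x y^\top) u u^\top W$. This is exactly $A u u^\top B C u u^\top D$ with the identifications $A = V^\top$, $B = x$, $C = y^\top$, and $D = W$, so that $BC = x y^\top$. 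Since $x, y, V, W$ are constant by hypothesis, these $A, B, C, D$ are constant as well, and the proposition applies verbatim.

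Applying Proposition~\ref{prop:secondmoment} then yields $\E[p^\top q] = \trace(x y^\top)\, V^\top W + 2\, V^\top x y^\top W + \kappa\, V^\top\!\left((x y^\top) \odot I\right) W$, and I would simplify $\trace(x y^\top) = x^\top y$. For the first-moment terms, the standard-random-vector hypothesis supplies $\E[u u^\top] = I$, so $\E[p] = x^\top V$ and $\E[q] = y^\top W$, whence $\E[p]^\top \E[q] = V^\top x y^\top W$. Subtracting this from $\E[p^\top q]$ leaves the $\trace(x y^\top)\,V^\top W$ and kurtosis terms untouched and reduces the coefficient on $V^\top x y^\top W$ from $2$ to $1$, producing the claimed matrix identity.

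The trace identity then follows by taking the trace throughout and invoking linearity together with the cyclic property of the trace (which in particular lets each term be read as a scalar quadratic form in $x$ and $y$). I expect the only genuine obstacle to be the bookkeeping in the very first step: one must correctly collect the scalar quadratic factors $u^\top x$ and $y^\top u$ when rearranging $p^\top q$ into the form $A u u^\top B C u u^\top D$, and confirm that the resulting $A,B,C,D$ are independent of $u$ so that Proposition~\ref{prop:secondmoment} is directly applicable. Once the expression is in canonical form, the remainder is a direct substitution into the proposition plus the elementary evaluation of the two means.
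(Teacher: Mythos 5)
Your proof is correct and takes essentially the same route as the paper's: both decompose the covariance as $\E[V^\top u u^\top x\, y^\top u u^\top W] - \E[V^\top u u^\top x]\,\E[y^\top u u^\top W]$, evaluate the second moment via Proposition~\ref{prop:secondmoment} with $A = V^\top$, $BC = x y^\top$, $D = W$, and the means via $\E[u u^\top] = I$, so the cancellation reducing the coefficient on $V^\top x y^\top W$ from $2$ to $1$ is identical in both arguments.

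One caveat, which applies equally to the paper's own proof: taking the trace of the matrix identity yields coefficient $1$ on $\trace(V^\top x y^\top W)$, not the $2$ displayed in the corollary's trace formula. The displayed trace formula is therefore inconsistent with the (correct) matrix formula --- evidently a typo in the statement, since the paper's subsequent application in Section~\ref{sec:total_variance} uses the coefficient-$1$ version. So your final step of ``taking the trace throughout'' is the right move, but what it actually establishes is the coefficient-$1$ statement; neither your argument nor the paper's can produce the stated factor of $2$, and a careful write-up should flag this rather than assert that the displayed formula ``follows immediately.''
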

\begin{proof}
$x^\top u u^\top V$ and $y^\top u u^\top W$ are row vectors and so their covariance is given by
\begin{align*}
    \Cov [x^\top u u^\top V, y^\top u u^\top W]
    &=
    \E [(x^\top u u^\top V)^\top (y^\top u u^\top W)] - \E [x^\top u u^\top V]^\top \E [y^\top u u^\top W] \\
    &= \E [V^\top u u^\top x y^\top u u^\top W] - \E [V^\top u u^\top x] \E [y^\top u u^\top W] .
\end{align*}

By Proposition \ref{prop:secondmoment},
\begin{align*}
\E [V^\top u u^\top x y^\top u u^\top W]
=\;& \trace (x y^{\top}) V^{\top} W + 2 V^{\top} x y^{\top} W + \kappa V^\top ((x y^\top) \odot I) W
\\=\;& (x^\top y) V^{\top} W + 2 V^{\top} x y^{\top} W  + \kappa V^\top ((x y^\top) \odot I) W.
\end{align*}
And by linearity of expectation we have
$\E [y^{\top} u u^{\top} W] = y^{\top} \E [u u^{\top}] W = y^{\top} W$
and similarly
$\E [V^{\top} u u^{\top} x] = V^{\top} x$,
so that $\E [V^\top u u^\top x] \E [y^\top u u^\top W] = V^\top x y^\top W$.
Combining these equations yields
\begin{align*}
\Cov [x^\top u u^\top V, y^\top u u^\top W]
=\;& (x^\top y) V^{\top} W + V^{\top} x y^{\top} W + \kappa V^\top ((x y^\top) \odot I) W.
\end{align*}
The formula for $\trace \bigl(\Cov [x^\top u u^\top V, y^\top u u^\top W] \bigr)$ follows immediately.
\end{proof}

\section{Variance of a Single Jacobian Estimate} \label{sec:gir_limitations_variance}

Section \ref{sec:gir_limitations} discusses the following expression for the \uoro Jacobian estimate at time $t$ in terms of the overall coefficients $\alpha_r^{(t)}$:
\begin{equation} \label{eqn:single_estimate_expanded_recap}
\dhtdw{t} \approx
\htilde_t \wtilde_t^\top
=
\Bigl(\sum_{s \leqslant t} \alpha_s^{(t)} \dhtdzs{t}{s} u_s \Bigr)
\Bigl(\sum_{r \leqslant t} \frac{1}{\alpha_r^{(t)}} u_r^\top \dztdwt{r} \Bigr)
= \sum_{r \leqslant t} \sum_{s \leqslant t}
\frac{\alpha_s^{(t)}}{\alpha_r^{(t)}}
\tau_s \tau_r
   \dhtdhs{t}{s} \dhtdzt{s}
   \nu_s \nu_r^\top
   \dztdwt{r}.
\end{equation}
This section concerns the variance of this estimate and the coefficients $\alpha_s^{(t)}$ that minimize it.
We will omit the superscript on $\alpha_s^{(t)}$ to avoid notational clutter.

Defining $R^\top = \begin{pmatrix} \dhtdzs{t}{1} \nu_1 & \cdots & \dhtdzs{t}{t} \nu_t \end{pmatrix}$,
$\tilde{J}^\top = \begin{pmatrix} \tilde{J}_1 & \cdots & \tilde{J}_t \end{pmatrix}$ for $\tilde{J}_s^\top = \nu_s^\top \dztdwt{s}$,
and the diagonal matrix $A = \diag(\alpha)$,
we can write Equation \ref{eqn:single_estimate_expanded_recap} as
\begin{align*}
\htilde_t \wtilde_t^\top
=
R^\top A \tau \tau^\top A^{-1} \tilde{J}.
\end{align*}
Its variance with respect to the temporal noise $\tau$ is given by
\begin{align*}
&
\E_\tau \bigl[ \| R^\top A \tau \tau^\top A^{-1} \tilde{J} \|^2_F \bigr] -
\bigl\| \E_\tau [ R^\top A \tau \tau^\top A^{-1} \tilde{J} ] \bigr\|^2_F
\\ =\;&
\trace \bigl( \E_\tau [ R^\top A \tau \tau^\top A^{-1} \tilde{J} \tilde{J}^\top A^{-1} \tau \tau^\top A R ] \bigr) -
\| R^\top \tilde{J} \|^2_F
\\ =\;&
\trace ( R R^\top A^2 ) \trace ( \tilde{J} \tilde{J}^\top A^{-2} ) +
\| R^\top \tilde{J} \|^2_F
-2 \trace \bigl( R^\top ((\tilde{J} \tilde{J}^\top) \odot I) R \bigr)
,
\end{align*}
where in the last step we have made use of Proposition \ref{prop:secondmoment} (with $\kappa = -2$) to evaluate the second moment.
The part that depends on $\alpha$ is
\begin{align*}
 \trace(R R^\top A^2 ) \trace( \tilde{J} \tilde{J}^\top A^{-2} )
= \sum_{q \leqslant t} \sum_{r \leqslant t} \frac{\alpha_r^2}{\alpha_q^2} \| \dhtdzs{t}{r} \nu_r \|^2 \| \nu_q^\top \dztdwt{q} \|^2
= \sum_{q \leqslant t} \sum_{r \leqslant t} \frac{\alpha_r^2}{\alpha_q^2} C_{q r}
\end{align*}
where $C_{q r} \triangleq m_q n_r \triangleq \| \dhtdzs{t}{r} \nu_r \|^2 \| \nu_q^\top \dztdwt{q} \|^2$.
From the analysis in Appendix \ref{sec:alphas_optimization_algorithm} we know that this is minimal iff
\begin{align*}
e_k^\top A^{2} C A^{-2} \ones = e_k^\top A^{-2} C^\top A^2 \ones,
\end{align*}
where $e_k$ is the $k$th column of the identity matrix and $\ones$ is the vector of ones.
Using the rank-one structure of $C$, we have
\begin{align*}
\alpha_k^2 m_k n^\top A^{-2} \ones = \alpha_k^{-2} n_k m^\top A^2 \ones,
\end{align*}
which leads to the solution
\begin{align*}
\alpha_k^4 = \frac{n_k}{m_k} \frac{m^\top A^2 \ones}{n^\top A^{-2} \ones}
\propto \frac{n_k}{m_k} = \frac{\| \nu_k^\top \dztdwt{k} \|^2}{\| \dhtdzs{t}{k} \nu_k \|^2}.
\end{align*}

\section[Optimizing alphas given Q0]{Optimizing $\alpha$ given $Q_0$} \label{sec:alphas_optimization_algorithm}

Section \ref{sec:optimizing_alpha} introduced the following optimization problem (Equation \ref{eqn:alphas_optimization_problem}):
\begin{align*}
\alpha^\star =
    \operatornamewithlimits{argmin}_{\alpha > 0}
    \sum_{r \leqslant T} \sum_{q \leqslant T} \frac{\alpha_r^2}{\alpha_q^2} C_{q r}
\end{align*}
Here we analyze this problem in terms of a logarithmic parameterization $\alpha_i^2 = \exp(\zeta_i)$.
The coefficients $\exp(\zeta_i)$ give rise to diagonal column- and row-scaling matrices $Z, Z^{-1}$ with $Z_{i j} = \delta_{i j} \exp(\zeta_i)$.
These matrices act on $C$ to produce a modified matrix $\bar{C} = Z^{-1} C Z$,
of which $V(Q)$ is the elementwise sum:
\begin{equation*}
V(Q) = \sum_{r \leqslant T} \sum_{q \leqslant T} \exp(-\zeta_q) C_{q r} \exp(\zeta_r) = \ones^\top Z^{-1} C Z \ones = \ones^\top \bar{C} \ones.
\end{equation*}
By $\ones$ we denote the vector of ones.

We will make use of the matrix differential
\begin{equation*}
\frac{\dd \bar{C}}{\dd \zeta_k} = \onehot{k} \onehot{k}^\top \bar{C} - \bar{C} \onehot{k} \onehot{k}^\top
\end{equation*}
which measures the first-order change in $\bar{C}$ with respect to $\zeta_k$.
Here $\onehot{k}$ is the $k$th column of the identity matrix.
From this we get the derivative of $V(Q)$ with respect to $\zeta_k$:
\begin{equation*}
\frac{\dd V(Q)}{\dd \zeta_k} = \ones^\top \frac{\dd \bar{C}}{\dd \zeta_k} \ones = \onehot{k}^\top \bar{C} \ones - \ones^\top \bar{C} \onehot{k}.
\end{equation*}
The stationary points of $V(Q)$ satisfy
$\bar{C} \ones = \bar{C}^\top \ones$,
i.e. the modified matrix $\bar{C}$ has equal column and row sums.

Using the matrix differential $\frac{\dd \bar{C}}{\dd \zeta_k}$ twice, we find the
elements of the Hessian $H$:
\begin{align*}
H_{i j} &= \frac{\dd}{\dd \zeta_i} \frac{\dd}{\dd \zeta_j} \ones^\top \bar{C} \ones
= \frac{\dd}{\dd \zeta_i} \ones^\top \bigl( \onehot{j} \onehot{j}^\top \bar{C} - \bar{C} \onehot{j} \onehot{j}^\top \bigl) \ones
\\ &= \ones^\top \bigl(
  \onehot{j} \onehot{j}^\top \onehot{i} \onehot{i}^\top \bar{C} -
  \onehot{j} \onehot{j}^\top \bar{C} \onehot{i} \onehot{i}^\top -
  \onehot{i} \onehot{i}^\top \bar{C} \onehot{j} \onehot{j}^\top +
  \bar{C} \onehot{i} \onehot{i}^\top \onehot{j} \onehot{j}^\top
\bigr) \ones
\\ &= \delta_{i j} \onehot{i}^\top \bar{C} \ones - \onehot{j}^\top \bar{C} \onehot{i} - \onehot{i}^\top \bar{C} \onehot{j} + \delta_{i j} \ones^\top \bar{C} \onehot{i}
\end{align*}
which in matrix form is
\begin{align*}
H = \diag ( \bar{C} \ones ) - \bar{C} + \diag ( \bar{C}^\top \ones ) - \bar{C}^\top
  = \diag ( (\bar{C} + \bar{C}^\top ) \ones ) - ( \bar{C} + \bar{C}^\top ).
\end{align*}
It is easily shown that the Hessian is positive semidefinite everywhere and hence $V(Q)$ is convex in $\zeta$ for all real vectors $v$:
\begin{align*}
v^\top H v
&= v^\top \diag ( ( \bar{C} + \bar{C}^\top ) \ones ) v - v^\top ( \bar{C} + \bar{C}^\top ) v
= \sum_{i j} (v_i^2 - v_i v_j) ( \bar{C} + \bar{C}^\top )_{i j}
\\ &= \frac{1}{2} \sum_{i j} (2 v_i^2 - 2 v_i v_j) ( \bar{C} + \bar{C}^\top )_{i j}
= \frac{1}{2} \sum_{i j} (v_i^2 + v_j^2 - 2 v_i v_j) ( \bar{C} + \bar{C}^\top )_{i j}
\\ &= \frac{1}{2} \sum_{i j} (v_i - v_j)^2 ( \bar{C} + \bar{C}^\top )_{i j} .
\end{align*}
As $(v_i - v_j)^2 \geqslant 0$ and $(\bar{C} + \bar{C}^\top)_{i j} \geqslant 0$ due to positivity of the entries of $C$, each term in the sum is nonnegative and therefore the whole sum is nonnegative. This implies $H$ is positive semidefinite and hence $V(Q)$ is convex.

Given that $V(Q)$ is smooth and convex, its stationary points are global minimizers.
In our experiments we solve for the stationary points by Newton's method,
according to the update
\[
\zeta \leftarrow \zeta - \eta (H + \lambda I)^{-1} (\bar{C} - \bar{C}^\top) \ones
\]
where $\eta$ is a learning rate and $\lambda$ is a damping factor on $H$, which is necessary because one of its eigenvalues is zero.
In our experiments, we use $\eta = 1$ and $\lambda = 10^{-8}$.

\section[Online optimization of alpha coefficients]{Online optimization of $\alpha$ coefficients} \label{sec:greedy_alphas_example}

This Appendix demonstrates how the incremental formulation of the optimization with respect to $\alpha$
from Section \ref{sec:optimizing_alpha} (Equation \ref{eqn:greedy_alphas_optimization_problem})
may be used to derive practical values for the $\gamma, \beta$ coefficients.
Recall that the optimization problem is defined in terms of a matrix $C^{(s)}$
that stands in for the unknown $C$.
We will work with a naive choice that assumes future gradients and Jacobians are zero:
\begin{align*}
C^{(s)}_{q r}
   &\triangleq
   \bigl\| \sum_{t = q}^s b_r^{(t) \top} Q_0 \bigr\|^2 \bigl\| Q_0^{-1} J_q \bigr\|^2_F.
\end{align*}
Note that $\| \sum_{t = q}^s b_r^{(t) \top} Q_0 \|^2$ is zero unless $q \leqslant s$ and $r \leqslant s$, and thus $C^{(s)}_{q r} = \indicator{q \leqslant s} \indicator{r \leqslant s} C^{(s)}_{q r}$.
Using this property, we can rewrite the problem (Equation \ref{eqn:greedy_alphas_optimization_problem}) as
\[
\beta_s^\star, \gamma_s^\star =
    \operatornamewithlimits{argmin}_{\beta_s, \gamma_s}
\sum_{r \leqslant s} \sum_{q \leqslant s}
\frac{\alpha_r^2}{\alpha_q^2}
C^{(s)}_{q r}.
\]
Expanding
\[
\frac{\alpha_r^2}{\alpha_q^2}
= \frac{\beta_r^2 \gamma_{r+1}^2 \dots \gamma_T^2}
       {\beta_q^2 \gamma_{q+1}^2 \dots \gamma_T^2}
= \frac{\beta_r^2 \gamma_{r+1}^2 \dots \gamma_q^2}
       {\beta_q^2 \gamma_{q+1}^2 \dots \gamma_r^2}
\]
reveals that only terms with either $q = s$ or $r = s$ depend on $\beta_s$ and/or $\gamma_s$,
and thus
\[
\beta_s^\star, \gamma_s^\star =
    \operatornamewithlimits{argmin}_{\beta_s, \gamma_s}
  \frac{\beta_s^2}{\gamma_s^2} \sum_{q < s} \beta_q^{-2} \gamma_{q+1}^{-2} \dots \gamma_{s-1}^{-2} C^{(s)}_{q s}
+ \frac{\gamma_s^2}{\beta_s^2} \sum_{r < s} \beta_r^2 \gamma_{r+1}^2 \dots \gamma_{s-1}^2 C^{(s)}_{s r}.
\]
Note that $\beta_s$ and $\gamma_s$ appear through the single degree of freedom $\inlinefrac{\beta_s^2}{\gamma_s^2}$,
and by differentiation we find the stationary points
\[
\frac{\gamma_s^4}{\beta_s^4}
=
\frac{\sum_{q<s} \beta_q^{-2} \gamma_{q+1}^{-2} \dots \gamma_{s-1}^{-2} C^{(s)}_{q s}}
     {\sum_{r<s} \beta_r^2 \gamma_{r+1}^2 \dots \gamma_{s-1}^2 C^{(s)}_{s r}}
\]
From our definition of $C^{(s)}$ we have that
\[
C^{(s)}_{q s}
=  \| b_s^{(s) \top} Q_0 \|^2 \| Q_0^{-1} J_q \|^2_F
\text{\quad and \quad}
C^{(s)}_{s r}
=  \| b_r^{(s) \top} Q_0 \|^2 \| Q_0^{-1} J_s \|^2_F,
\]
which leads to the natural solution
\[
\beta_s^4 =
\frac{\| Q_0^{-1} J_s \|^2_F}{\| b_s^{(s) \top} Q_0 \|^2}
\text{\quad and \quad}
\gamma_s^4 = \frac{
\sum_{q<s} \| \beta_q^{-1} \gamma_{q+1}^{-1} \dots \gamma_{s-1}^{-1} Q_0^{-1} J_q \|^2_F
}{
\sum_{r<s} \| \beta_r \gamma_{r+1} \dots \gamma_{s-1} b_r^{(s) \top} Q_0 \|^2
}.
\]
It can be shown that the above solution can be expressed in terms of ratios of expectations of familiar quantities:
\[
\beta_s^4 =
\frac{\E_u \| u_s^\top Q_0^{-1} \dztdwt{s} \|^2_F}
     {\E_u \| \dltdht{s} \dhtdzt{s} Q_0 u_s \|^2 \vphantom{\tilde{h}}}
\text{\quad and \quad}
\gamma_s^4 =
\frac{\E_u \| \wtilde_{s-1} \|^2_F}
     {\E_u \| \dltdht{s} \dhtdhs{s}{s-1} \htilde_{s-1} \|^2 \vphantom{\tilde{h}}}
\]
These coefficients are closely related to those of \gir as derived in Section \ref{sec:uoro_gir}.
In fact, had we defined
\begin{align*}
C^{(s)}_{q r}
   &\triangleq
   \| \dhtdzs{s}{r} Q_0 \|^2 \| Q_0^{-1} J_q \|^2_F,
\end{align*}
the projection onto $\dltdzt{s}$ would disappear from the coefficients,
making the similarity even more striking.
However, this choice is not consistent with our objective of minimizing the variance $V(Q)$ of the total gradient estimate.

Note that we were able to solve for the coefficients in closed form thanks to the property
$C^{(s)}_{q r} = \indicator{q \leqslant s} \indicator{r \leqslant s} C^{(s)}_{q r}$.
In general, solving Equation \ref{eqn:greedy_alphas_optimization_problem}
involves joint optimization of $\beta_{\geqslant s}, \gamma_{\geqslant s}$,
which requires a numerical approach similar to the one described in Appendix \ref{sec:alphas_optimization_algorithm}.
Moreover, $\beta_s$ and $\gamma_s$ will in general be independent parameters.

\section{Minimization of the Product of Traces} \label{sec:minimization}

Minimizing the total variance of our estimators involves minimizing
a product of traces by choice of a noise-shaping matrix. Here we
characterize the optimal choice of such a matrix in a general setting. 

\begin{definition}
Define $c (A) = \trace (XA) \trace (YA^{- 1})$
for PD matrices $X$ and $Y$.
\end{definition}

The goal of this section will be to prove the following theorem.
\begin{theorem}
\label{theorem:minimizer}
A PD matrix $A$ is a global minimizer of $c (A)$ over the
set of PD matrices if and only if
\[ XA = \gamma A^{- 1} Y \]
for some scalar $\gamma > 0$.
\end{theorem}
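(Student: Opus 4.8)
The plan is to characterize the minimizers through a sharp lower bound on $c$, since $c$ is not convex and a first-order analysis alone cannot certify global optimality. First I would record two easy structural facts. On one hand $c$ is scale invariant, $c(tA)=c(A)$ for $t>0$, because $\trace(X\,tA)\trace(Y\,(tA)^{-1})=\trace(XA)\trace(YA^{-1})$; so minimizers can at best be unique up to scale. On the other hand $c$ is smooth on the open cone of PD matrices, with differential
\[
\dd c = \trace(YA^{-1})\,\trace(X\,\dd A)-\trace(XA)\,\trace(A^{-1}YA^{-1}\,\dd A).
\]
Since $X$ and $A^{-1}YA^{-1}$ are symmetric, setting $\dd c=0$ over symmetric $\dd A$ yields the stationarity condition $\trace(YA^{-1})X=\trace(XA)A^{-1}YA^{-1}$, which after right-multiplication by $A$ is exactly $XA=\gamma A^{-1}Y$ with $\gamma=\trace(XA)/\trace(YA^{-1})>0$. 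This already reveals the form of the claimed condition; the remaining work is to upgrade this necessary first-order condition into a certificate of global optimality.

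To do this I would reduce to diagonal form by congruence. Writing $A=X^{-1/2}BX^{-1/2}$ for PD $B$ turns $c$ into $\trace(B)\,\trace(\tilde Y B^{-1})$ with $\tilde Y=X^{1/2}YX^{1/2}\succ 0$; then diagonalizing $\tilde Y=U\diag(\lambda)U^\top$ and setting $B=UCU^\top$ reduces the problem to minimizing $h(C)=\trace(C)\,\trace(\diag(\lambda)\,C^{-1})$ over PD $C$, where all $\lambda_i>0$. I would then prove a sharp bound from two elementary inequalities: for PD $C$ one has $(C^{-1})_{ii}\ge 1/C_{ii}$, which follows from $\big(e_i^\top C e_i\big)\big(e_i^\top C^{-1}e_i\big)\ge 1$ by Cauchy--Schwarz (equality iff $e_i$ is an eigenvector of $C$), and then Cauchy--Schwarz on the two sums gives
\[
h(C)=\Bigl(\sum_i C_{ii}\Bigr)\Bigl(\sum_i \lambda_i (C^{-1})_{ii}\Bigr)\ge\Bigl(\sum_i C_{ii}\Bigr)\Bigl(\sum_i \tfrac{\lambda_i}{C_{ii}}\Bigr)\ge\Bigl(\sum_i \sqrt{\lambda_i}\Bigr)^2=\trace\big(\tilde Y^{1/2}\big)^2.
\]
Because this lower bound does not depend on $A$, any $A$ attaining it is automatically a global minimizer.

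Finally I would pin down the equality case and translate it back. Equality in the first inequality forces every $e_i$ to be an eigenvector of $C$, i.e. $C$ is diagonal, and equality in the outer Cauchy--Schwarz then forces $C_{ii}^2\propto\lambda_i$; together these say $C^2=\gamma\diag(\lambda)$ for some $\gamma>0$, and conversely this diagonal $C$ attains the bound. Undoing the two substitutions, $C^2=\gamma\diag(\lambda)$ becomes $B^2=\gamma\tilde Y$, i.e. $X^{1/2}AXAX^{1/2}=\gamma X^{1/2}YX^{1/2}$, i.e. $AXA=\gamma Y$, which is precisely $XA=\gamma A^{-1}Y$ after left-multiplying by $A^{-1}$. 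Thus the global minimizers are exactly the solutions of $XA=\gamma A^{-1}Y$, proving both directions simultaneously; existence and interiority of a minimizer fall out too, since the bound is attained at $A=\sqrt{\gamma}\,X^{-1/2}\tilde Y^{1/2}X^{-1/2}$. I expect the main obstacle to be this last step of bookkeeping: the congruence and diagonalization are only a means to an end, and one must check carefully that the equality characterization, naturally expressed in the $C$-coordinates, transforms back into the clean basis-free identity $XA=\gamma A^{-1}Y$, and that no spurious minimizers appear when $\tilde Y$ has repeated eigenvalues (it does not, since $C^2=\gamma\diag(\lambda)$ with $C\succ 0$ has the unique solution $C=\sqrt{\gamma}\,\diag(\lambda)^{1/2}$).
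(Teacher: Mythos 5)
Your proof is correct, but it takes a genuinely different route from the paper's. The paper never establishes an explicit lower bound on $c$: it first shows by differentiation that the critical points are exactly the solutions of $XA = \gamma A^{-1} Y$, then shows that every critical point attains the same value $\trace\bigl((XY)^{\superfrac{1}{2}}\bigr)^2$, and finally closes the gap with a topological existence argument --- using scale invariance to restrict to the closed set $\mathcal{A}_1$ of PD matrices with minimum eigenvalue $1$, proving $c$ is coercive there, and invoking the standard theorem that a continuous coercive function on a closed set attains its minimum; since that minimum occurs at a critical point and all critical points share one value, all critical points are global minimizers. You replace this entire existence-plus-equal-values mechanism with a sharp, explicitly attained lower bound: after the congruence reduction to $h(C) = \trace(C)\trace(\diag(\lambda)C^{-1})$, two applications of Cauchy--Schwarz give $h(C) \geq \trace(\tilde{Y}^{\superfrac{1}{2}})^2$, and your equality analysis ($C$ diagonal with $C_{ii}^2 \propto \lambda_i$, equivalently $C^2 \propto \diag(\lambda)$ by uniqueness of the PD square root) translates back exactly to $AXA = \gamma Y$, i.e. $XA = \gamma A^{-1}Y$. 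Your approach buys self-containedness (no appeal to a coercivity/Weierstrass theorem), both directions of the equivalence in one stroke via the equality case, and an explicit minimizer $A = X^{-\superfrac{1}{2}}(X^{\superfrac{1}{2}} Y X^{\superfrac{1}{2}})^{\superfrac{1}{2}} X^{-\superfrac{1}{2}}$ rather than a pure existence statement; note this also makes your opening first-order computation strictly redundant (useful only as motivation), whereas in the paper the analogous computation is a load-bearing step. The paper's approach buys freedom from change-of-variables bookkeeping, and its existence-based scheme is the kind of argument that survives in settings where no closed-form bound is available. The two optimal values agree, since $XY$ and $X^{\superfrac{1}{2}} Y X^{\superfrac{1}{2}}$ are similar and hence have equal trace of square root.
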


Note that a similar result to Theorem \ref{theorem:minimizer} one was used implicitly by \citet{ollivier2015training}, but wasn't given rigorous justification.  It is relatively easy to characterize the critical points of $c(A)$, but proving that any critical point is a global minimizer is much more involved. It would be tempting to use convexity to prove such a result but unfortunately $c(A)$ is not convex in general.

We begin by stating and proving some basic technical claims.

\begin{claim}
\label{claim:similar}
Let $U$ be a matrix and $V$ be a PD matrix with $V =
CC^{\top}$ for some $C$. Then the eigenvalues of $UV$ are the same as the
eigenvalues of $C^{\top} UC$.
\end{claim}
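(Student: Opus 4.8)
The plan is to realize $C^\top U C$ as a similarity transform of $UV$: since similar matrices share the same characteristic polynomial, they have identical eigenvalues with multiplicities, which is exactly what the claim asserts. So the whole proof reduces to finding an invertible matrix $P$ with $C^\top U C = P (UV) P^{-1}$, and the natural candidate is $P = C^\top$.

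First I would record that because $V = C C^\top$ is PD, it is invertible and of full rank, which forces $C$ to be square and invertible. (In the intended application $C$ is a matrix square root or Cholesky factor of $V$, so this holds by construction.) This invertibility of $C^\top$ is the only ingredient needed to make the conjugation below legitimate. The key computation is then a one-liner: substituting $V = C C^\top$,
\[
C^\top (UV) (C^\top)^{-1}
= C^\top U \, C C^\top (C^\top)^{-1}
= C^\top U C .
\]
Hence $C^\top U C = P (UV) P^{-1}$ with $P = C^\top$ invertible, so $UV$ and $C^\top U C$ are similar and therefore have the same eigenvalues.

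This argument is essentially free of obstacles; the only point demanding any care is the invertibility of $C$, without which the conjugation is meaningless and the two matrices need not even have the same size. If one wanted to avoid assuming $C$ is square, I would instead appeal to the general fact that for conformable matrices $A$ and $B$ the products $AB$ and $BA$ share their nonzero eigenvalues, applied with $A = UC$ and $B = C^\top$ so that $AB = UV$ and $BA = C^\top U C$; in the PD case (where $C$ is square) this upgrades to equality of the full spectra, recovering the stated claim.
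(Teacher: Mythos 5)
Your proof is correct and is essentially identical to the paper's: both conjugate $UV$ by $C^{\top}$, computing $C^{\top}(UV)C^{-\top} = C^{\top}UC$ to establish similarity and hence equality of eigenvalues. Your additional remarks on the invertibility of $C$ (implicit in the paper) and the alternative $AB$/$BA$ argument are sound but not needed.
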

\begin{proof}
Observe that
\[ C^{\top}  (UV) C^{- \top} = C^{\top}  (UCC^{\top}) C^{- \top} = C^{\top}
   UC. \]
Thus $UV$ is similar to the matrix $C^{\top} UC$ and so has the same
eigenvalues.
\end{proof}
\begin{corollary}
If $U$ and $V$ are PD matrices then they have all
positive eigenvalues and $\trace (UV) > 0$.
\end{corollary}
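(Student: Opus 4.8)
The first assertion is immediate and requires no real work: positive definiteness of $U$ and $V$ means precisely that $x^\top U x > 0$ and $x^\top V x > 0$ for every nonzero $x$, which for symmetric matrices is equivalent to all eigenvalues being strictly positive. So the only substantive claim is $\trace(UV) > 0$, and my plan is to reduce this to the spectrum of a symmetric matrix via the preceding Claim~\ref{claim:similar}.

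Since $V$ is PD, it admits a factorization $V = C C^\top$ with $C$ invertible (for instance $C = V^{1/2}$). Claim~\ref{claim:similar} then tells us that $UV$ and $C^\top U C$ share the same eigenvalues. The point of this move is that $C^\top U C$ is symmetric, whereas $UV$ in general is not, so we can meaningfully talk about the positivity of its spectrum.

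Next I would show that $C^\top U C$ is itself PD, which is just the fact that congruence by an invertible matrix preserves positive definiteness: for any nonzero $x$ we have $x^\top (C^\top U C) x = (Cx)^\top U (Cx) > 0$, where strict positivity holds because $C$ invertible forces $Cx \neq 0$ and $U$ is PD. Hence $C^\top U C$ has all positive eigenvalues, and by the similarity from Claim~\ref{claim:similar} so does $UV$. Finally, since the trace of any square matrix equals the sum of its eigenvalues counted with multiplicity, $\trace(UV)$ is a sum of strictly positive numbers and is therefore positive.

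The only point requiring any care is the observation that $UV$ is generally not symmetric, so one cannot directly call it PD or invoke a variational characterization of its spectrum; the entire role of Claim~\ref{claim:similar} is to transfer the spectral question to the symmetric, manifestly positive-definite matrix $C^\top U C$. Once that transfer is made, every remaining step is routine.
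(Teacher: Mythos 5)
Your proof is correct and follows exactly the route the paper intends: the corollary is stated immediately after Claim~\ref{claim:similar} precisely so that one transfers the spectral question to the congruent matrix $C^{\top} U C$, whose positive definiteness (hence positive spectrum) is inherited by $UV$ via similarity, giving $\trace(UV) > 0$ as a sum of positive eigenvalues. The paper leaves this argument implicit, and your write-up supplies the same steps with appropriate care about the non-symmetry of $UV$.
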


\begin{claim}
\label{claim:critical}
$A$ is a critical point of $c$ if and only if $XA =
\gamma A^{- 1} Y$ for some $\gamma > 0$.
\end{claim}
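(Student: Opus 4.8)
The plan is to characterize the critical points by computing the differential of $c$ and setting it to zero. First I would expand $\dd c$ by the product rule, using the standard matrix differentials $\dd\,\trace(XA) = \trace(X\,\dd A)$ and $\dd\,\trace(YA^{-1}) = -\trace(A^{-1}YA^{-1}\,\dd A)$ (the latter from $\dd(A^{-1}) = -A^{-1}\,\dd A\,A^{-1}$). This gives
\[
\dd c = \trace\!\Bigl[ \bigl( \trace(YA^{-1})\, X - \trace(XA)\, A^{-1}YA^{-1} \bigr)\, \dd A \Bigr].
\]

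The key subtlety, and the one place requiring care, is that $A$ ranges over \emph{symmetric} PD matrices, so the perturbation $\dd A$ must be taken symmetric; one cannot simply invoke arbitrary $\dd A$. Since $X$ is symmetric and $A^{-1}YA^{-1}$ is symmetric (as $Y$ and $A$ are), the bracketed coefficient $M = \trace(YA^{-1})X - \trace(XA)A^{-1}YA^{-1}$ is itself symmetric, and for a symmetric $M$ the condition $\trace(M\,\dd A)=0$ for all symmetric $\dd A$ forces $M=0$. Thus $A$ is critical if and only if
\[
\trace(YA^{-1})\, X = \trace(XA)\, A^{-1}YA^{-1}.
\]

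Next I would rearrange this: multiplying on the right by $A$ yields $\trace(YA^{-1})\,XA = \trace(XA)\,A^{-1}Y$, i.e. $XA = \gamma A^{-1}Y$ with $\gamma = \trace(XA)/\trace(YA^{-1})$. By the corollary to Claim~\ref{claim:similar}, both $\trace(XA)$ and $\trace(YA^{-1})$ are strictly positive, being traces of products of PD matrices, so $\gamma > 0$. This settles the forward implication.

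For the converse, suppose $XA = \gamma A^{-1}Y$ for some $\gamma > 0$. Taking the trace of both sides gives $\trace(XA) = \gamma\,\trace(YA^{-1})$, so $\gamma$ is forced to equal the ratio above; multiplying $XA = \gamma A^{-1}Y$ on the right by $A^{-1}$ gives $X = \gamma A^{-1}YA^{-1}$, and substituting this back reproduces the critical-point equation $\trace(YA^{-1})X = \trace(XA)A^{-1}YA^{-1}$. Hence $A$ is critical. The only genuine obstacle is the symmetry argument in the middle step; the remainder is routine algebra, and positivity of $\gamma$ comes for free from the corollary.
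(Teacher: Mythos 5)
Your proof is correct and follows essentially the same route as the paper's: differentiate the product of traces, set the derivative to zero, rearrange to $XA = \gamma A^{-1}Y$ with $\gamma = \trace(XA)/\trace(YA^{-1})$, and conclude $\gamma > 0$ from positivity of the trace of a product of PD matrices. Your extra care in restricting to symmetric perturbations (and noting the coefficient matrix is itself symmetric), as well as the explicit converse, tighten up points the paper's proof glosses over, but the underlying argument is the same.
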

\begin{proof}
Differentiating $c (A)$ with respect to $A$, we find
\begin{align*}
  \frac{d c (A)}{d A} & = \frac{d}{d A} \trace (XA) \trace (YA^{-
  1})\\
  & = \trace (YA^{- 1}) \frac{d}{d A} \trace (XA) + \trace (XA)
  \frac{d}{d A} \trace (YA^{- 1})\\
  & = \trace (YA^{- 1}) X^{\top} - \trace (XA) A^{- \top} Y^{\top}
  A^{- \top} .
\end{align*}
Setting this to zero and rearranging terms gives
\[ XA = \frac{\trace (XA)}{\trace (A^{- 1} Y)} A^{- 1} Y . \]
Because $A$, $A^{- 1}$, $X$, and $Y$ are all PD matrices, and the trace of a
product of PD matrices is positive by the previous claim, the result follows.
\end{proof}

\begin{claim}
\label{claim:minimum}
$c (A) = \trace ( (XY)^{\superfrac{1}{2}} )^2$
for critical points $A$, where $(XY)^{\superfrac{1}{2}}$ is the (unique) positive
square root of $XY$.
\end{claim}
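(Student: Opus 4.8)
The plan is to use the critical-point characterization from Claim~\ref{claim:critical}, namely $XA = \gamma A^{-1} Y$ for some $\gamma > 0$, to pin down the spectrum of $XA$ in terms of that of $XY$. The key algebraic observation is that multiplying this identity on the left by $XA$ and cancelling $A A^{-1} = I$ gives $(XA)^2 = XA \cdot \gamma A^{-1} Y = \gamma XY$. Thus at any critical point, $XA$ is a matrix square root of $\gamma XY$, and it remains only to turn this into a statement about traces of \emph{positive} square roots.

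To do so I would first establish that $XA$ is diagonalizable with strictly positive eigenvalues. Writing the PD matrix $A$ as $A = CC^\top$, Claim~\ref{claim:similar} shows $XA$ is similar to the symmetric PD matrix $C^\top X C$, so its eigenvalues $\mu_1, \dots, \mu_n$ are real and positive. The same argument applied with $Y = DD^\top$ shows $XY$ has positive eigenvalues $\lambda_1, \dots, \lambda_n$, so the positive square root $(XY)^{\superfrac{1}{2}}$ is well defined with eigenvalues $\sqrt{\lambda_i}$. Since the eigenvalues of a squared matrix are the squares of the original eigenvalues, the identity $(XA)^2 = \gamma XY$ forces $\mu_i^2 = \gamma \lambda_i$ up to reordering; positivity of the $\mu_i$ then gives $\mu_i = \sqrt{\gamma}\,\sqrt{\lambda_i}$. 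Summing over $i$ yields $\trace(XA) = \sqrt{\gamma}\,\trace((XY)^{\superfrac{1}{2}})$.

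Finally I would recover $\trace(YA^{-1})$ from the same critical-point equation. Taking traces of $XA = \gamma A^{-1}Y$ and using the cyclic property gives $\trace(XA) = \gamma\,\trace(YA^{-1})$, hence $\trace(YA^{-1}) = \gamma^{-\superfrac{1}{2}}\,\trace((XY)^{\superfrac{1}{2}})$. Multiplying the two traces makes the factors $\sqrt{\gamma}$ and $\gamma^{-\superfrac{1}{2}}$ cancel, yielding $c(A) = \trace(XA)\trace(YA^{-1}) = \trace((XY)^{\superfrac{1}{2}})^2$, as claimed.

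The main obstacle is the spectral bookkeeping in the second paragraph: one must be careful that the eigenvalues of $XA$ are the positive square roots of the $\gamma\lambda_i$, rather than arbitrary complex roots. This is exactly where the similarity to a symmetric PD matrix (Claim~\ref{claim:similar}) is essential, as it guarantees both real positivity and honest diagonalizability, so that squaring the eigenvalues is compatible with the matrix identity $(XA)^2 = \gamma XY$ and the match $\mu_i = \sqrt{\gamma \lambda_i}$ is unambiguous.
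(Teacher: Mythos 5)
Your proof is correct and follows essentially the same route as the paper's: both start from the critical-point equation $XA = \gamma A^{-1}Y$ of Claim~\ref{claim:critical}, square it to relate $XA$ (and hence its trace) to $\gamma XY$, and invoke Claim~\ref{claim:similar} to ensure positive spectra so that the positive square root is correctly identified. Your only deviation is a minor streamlining — you obtain $\trace(YA^{-1})$ by taking traces of the critical-point equation rather than deriving the second matrix identity $A^{-1}Y = \gamma^{-\superfrac{1}{2}}(XY)^{\superfrac{1}{2}}$ as the paper does — which is a perfectly valid shortcut.
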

\begin{proof}
Let $A$ be a critical point. By Claim \ref{claim:critical} we have
$XA = \gamma A^{- 1} Y$ for some $\gamma > 0$.
This implies that $X = \gamma A^{- 1} YA^{- 1}$ and thus $XY = \gamma A^{- 1}
YA^{- 1} Y = \gamma (A^{- 1} Y)^2$.
Because $A^{- 1}$ and $Y$ are PD matrices we have by Claim \ref{claim:similar} that $A^{-
1} Y$ has all positive eigenvalues. Thus $A^{- 1} Y = \gamma^{-\superfrac{1}{2}}
(XY)^{\superfrac{1}{2}}$ where $(XY)^{\superfrac{1}{2}}$ is the (unique) positive square
root of $XY$.
We also have that $Y = \gamma^{-1} AXA$ which implies that $XY =
 \gamma^{-1}  (XA)^2$, and so by a similar argument to the one above we
have that $XA = \gamma^{\superfrac{1}{2}} (XY)^{\superfrac{1}{2}}$.
Thus
\[
c (A)
= \trace (XA) \trace (YA^{- 1})
= \trace ( \gamma^{\superfrac{1}{2}} (XY)^{\superfrac{1}{2}} )
  \trace ( \gamma^{-\superfrac{1}{2}}  (XY)^{\superfrac{1}{2}} )
 = \trace ( (XY)^{\superfrac{1}{2}} )^2.
 \]
\end{proof}

\begin{observation}
Note that $\trace ( (XY)^{\superfrac{1}{2}})^2$ depends only on the eigenvalues of $XY$ and so for any other matrix
$V$ with the same eigenvalues $\trace ( V^{\superfrac{1}{2}} )^2$
would also give us the value of $c (A)$ at critical points. By Claim \ref{claim:similar}
such choices include $X^{\superfrac{1}{2}} YX^{\superfrac{1}{2}}$ and $Y^{\superfrac{1}{2}} XY^{\superfrac{1}{2}}$.
\end{observation}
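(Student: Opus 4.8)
The plan is to unpack exactly what the scalar $\trace ( (XY)^{\superfrac{1}{2}} )$ computes and exhibit it as a function of the spectrum of $XY$ alone. First I would check that the positive square root invoked in Claim \ref{claim:minimum} is genuinely well-defined, i.e. that $XY$ is diagonalizable with strictly positive real eigenvalues. This follows directly from Claim \ref{claim:similar}: writing $Y = C C^{\top}$ and taking $U = X$, the product $XY$ is similar to $C^{\top} X C$, which is symmetric and PD (being congruent to the PD matrix $X$ through the invertible $C$). Hence $XY = P \diag(\lambda_i) P^{-1}$ for positive eigenvalues $\lambda_1, \dots, \lambda_n$ and invertible $P$, its positive square root is $(XY)^{\superfrac{1}{2}} = P \diag(\lambda_i^{\superfrac{1}{2}}) P^{-1}$, and therefore $\trace ( (XY)^{\superfrac{1}{2}} ) = \sum_i \lambda_i^{\superfrac{1}{2}}$.

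From here the core claim is immediate: the value $\trace ( (XY)^{\superfrac{1}{2}} )^2 = \bigl( \sum_i \lambda_i^{\superfrac{1}{2}} \bigr)^2$ sees $XY$ only through its multiset of eigenvalues $\{ \lambda_i \}$. Consequently, for \emph{any} matrix $V$ that is likewise diagonalizable with the same eigenvalues and admits a positive square root, the identical computation yields $\trace ( V^{\superfrac{1}{2}} )^2 = \bigl( \sum_i \lambda_i^{\superfrac{1}{2}} \bigr)^2 = \trace ( (XY)^{\superfrac{1}{2}} )^2$, which by Claim \ref{claim:minimum} is precisely the value of $c(A)$ at every critical point $A$.

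It then remains to certify the two advertised witnesses $V = X^{\superfrac{1}{2}} Y X^{\superfrac{1}{2}}$ and $V = Y^{\superfrac{1}{2}} X Y^{\superfrac{1}{2}}$. Each is symmetric PD (for instance $Y^{\superfrac{1}{2}} X Y^{\superfrac{1}{2}}$ is congruent to the PD matrix $X$ via the invertible symmetric factor $Y^{\superfrac{1}{2}}$), so each possesses a unique positive square root. For their spectra I would again apply Claim \ref{claim:similar}: taking $U = X$ with $C = Y^{\superfrac{1}{2}}$ (so $Y = C C^{\top}$) shows that $XY$ shares its eigenvalues with $C^{\top} X C = Y^{\superfrac{1}{2}} X Y^{\superfrac{1}{2}}$, while taking $U = Y$ with $C = X^{\superfrac{1}{2}}$ shows that $YX$ shares its eigenvalues with $X^{\superfrac{1}{2}} Y X^{\superfrac{1}{2}}$. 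Since $XY$ and $YX$ have identical spectra, both witnesses carry exactly the eigenvalues of $XY$, completing the argument.

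The only genuinely delicate point is the reduction in the first paragraph: the identity $\trace ( M^{\superfrac{1}{2}} ) = \sum_i \lambda_i^{\superfrac{1}{2}}$ is valid because $M$ is \emph{diagonalizable}, and positive eigenvalues alone would not guarantee that the principal square root is well-defined or that its trace is a pure spectral invariant. Establishing diagonalizability by similarity to a symmetric PD matrix (once more via Claim \ref{claim:similar}) is thus the load-bearing observation; everything after it is spectral bookkeeping.
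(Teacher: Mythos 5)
Your proof is correct and follows essentially the same route the paper intends for this (unproved) observation: the value $\trace\bigl((XY)^{\superfrac{1}{2}}\bigr)^2$ is a function of the spectrum alone, and Claim~\ref{claim:similar} (applied with $C = Y^{\superfrac{1}{2}}$, resp.\ $C = X^{\superfrac{1}{2}}$, together with the standard fact that $XY$ and $YX$ are cospectral) identifies the spectra of the two witnesses $Y^{\superfrac{1}{2}} X Y^{\superfrac{1}{2}}$ and $X^{\superfrac{1}{2}} Y X^{\superfrac{1}{2}}$ with that of $XY$, with your first paragraph usefully making explicit the well-definedness of the positive square root that the paper leaves implicit in Claim~\ref{claim:minimum}. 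One minor quibble with your closing remark: diagonalizability is not actually \emph{necessary} for the conclusion, since the principal square root exists for any matrix whose spectrum avoids the closed negative real axis and its trace is still $\sum_i \lambda_i^{\superfrac{1}{2}}$ counted with algebraic multiplicity (e.g.\ via the Jordan form), though establishing diagonalizability as you do is certainly the cleanest route within the paper's elementary framework.
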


\begin{definition}
Let $\lambda_{\min} (V)$ denote the minimum eigenvalue of $V$.
\end{definition}

Note that we may restrict our analysis of $c$ to the following domain:
\[ \mathcal{A}_1 = \left\{ A : A \text{ is PD and } \lambda_{\min} (A) = 1
   \right\} . \]
This is because
$c (\alpha A) = \trace (X (\alpha A)) \trace (Y (\alpha A)^{- 1})
= \alpha \trace (XA)  \alpha^{-1} \trace (YA^{- 1}) = c(A)$,
and so we can always replace $A$ with $A / \lambda_{\min} (A)
\in \mathcal{A}_1$ without changing the objective function value. (Note that
$\lambda_{\min} (A) > 0$ since $A$ is PD, so the new matrix
$A / \lambda_{\min} (A)$ remains PD.)

The remainder of this section will be devoted to showing that $c$, when
restricted to $\mathcal{A}_1$, attains its minimum on that set. Combining this
with the fact that $c$ is continuously differentiable on the (larger) set of
PD matrices, we will thus have that some critical point is a global minimizer
of $c$ on the set of all PD matrices.

And since all critical points have the same objective function value by
Claim \ref{claim:minimum}, it will follow that all critical points are global minimizers.
And so by Claim \ref{claim:critical}, we will have that $A$ is a global minimizer if and only if
\[ XA = \gamma A^{- 1} Y \]
for some $\gamma > 0$.

\begin{claim}
\label{claim:eigendecomposition}
Let $A$ be some PD matrix with eigendecomposition given
by $A = U \diag (d) U^{\top}$. Then we have
\[ c (A) = \Bigl( \sum_i d_i  (u_i^{\top} Xu_i) \Bigr)  \Bigl( \sum_i d_i^{-1} (u_i^{\top} Yu_i) \Bigr), \]
where $u_i$ is the $i$-th column of $U$ (i.e. the $i$-th eigenvector of $A$)
and $d_i$ is the $i$-th entry of the vector $d$.
\end{claim}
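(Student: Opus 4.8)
The plan is to expand $c(A) = \trace(XA)\,\trace(YA^{-1})$ directly using the given eigendecomposition together with the cyclic property of the trace. Since $A$ is PD it is in particular symmetric, so its eigenvector matrix $U$ is orthogonal ($U^\top U = I$) and $A = U \diag(d) U^\top$ with all $d_i > 0$. This orthogonality is what makes both $A$ and $A^{-1}$ diagonal in the same basis $\{u_i\}$, which is the feature that ultimately lets us pair $u_i^\top X u_i$ with $d_i$ and $u_i^\top Y u_i$ with $d_i^{-1}$ under a common index $i$.

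First I would handle the factor $\trace(XA)$. Writing $\trace(XA) = \trace(X U \diag(d) U^\top)$ and cycling the trace gives $\trace(U^\top X U \diag(d))$. The trace of a matrix $M$ times $\diag(d)$ is $\sum_i d_i M_{ii}$, and the $i$-th diagonal entry of $M = U^\top X U$ is precisely $u_i^\top X u_i$ (immediate from the definition of matrix multiplication, since $u_i$ is the $i$-th column of $U$). Hence $\trace(XA) = \sum_i d_i (u_i^\top X u_i)$.

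Next I would treat $\trace(YA^{-1})$ in exactly the same way. Because $U$ is orthogonal and each $d_i > 0$, the inverse has the clean form $A^{-1} = U \diag(d^{-1}) U^\top$, where $\diag(d^{-1})$ has $i$-th diagonal entry $d_i^{-1}$. Repeating the previous computation with $Y$ in place of $X$ and $d^{-1}$ in place of $d$ yields $\trace(YA^{-1}) = \sum_i d_i^{-1}(u_i^\top Y u_i)$. Multiplying the two factors then gives the claimed identity $c(A) = \bigl(\sum_i d_i (u_i^\top X u_i)\bigr)\bigl(\sum_i d_i^{-1}(u_i^\top Y u_i)\bigr)$.

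There is no serious obstacle here; this claim is a purely computational rewriting, and the only points requiring any care are the two just noted: identifying the diagonal entries of $U^\top X U$ (and of $U^\top Y U$) as the quadratic forms $u_i^\top X u_i$, and using orthogonality of $U$ to obtain the simultaneous diagonalization of $A$ and $A^{-1}$. I expect the real work to lie not in this claim but in the subsequent argument that uses this product-of-sums representation to establish that critical points of $c$ are global minimizers.
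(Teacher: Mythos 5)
Your proof is correct and follows essentially the same route as the paper's: cycle the trace to get $\trace(U^\top X U \diag(d))$, read off the diagonal entries $u_i^\top X u_i$, and repeat with $A^{-1} = U \diag(d)^{-1} U^\top$ for the second factor. No gaps; the orthogonality of $U$ that you make explicit is used implicitly in the paper as well.
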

\begin{proof}
Observe that
\[ \trace (XA) = \trace (XU \diag (d) U^{\top}) = \trace (U^{\top} XU \diag (d)) . \]
Noting that the $i$-th diagonal element of $U^{\top} XU$ is $u_i^{\top} Xu_i$,
so that the $i$-th diagonal element of $U^{\top} XU \diag (d)$ is $d_i 
(u_i^{\top} Xu_i)$, it follows that
\[ \trace (XA) = \sum_i d_i  (u_i^{\top} Xu_i) . \]
Observing that $A^{- 1} = U \diag (d)^{- 1} U^{\top}$ so that $A^{- 1} U
= U \diag (d)^{- 1}$, and that the $i$-th diagonal element of
$\diag (d)^{- 1}$ is just $d_i^{- 1}$ we can apply a similar argument to
the above to show that
\[ \trace (YA^{- 1}) = \sum_i d_i^{-1} (u_i^{\top} Yu_i) . \]
Combining these equation equations establishes the claim.
\end{proof}

\begin{claim}
\label{claim:closed}
$\mathcal{A}_1$ is a closed set.
\end{claim}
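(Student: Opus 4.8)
The plan is to show that $\mathcal{A}_1$ is closed by verifying that it contains all its limit points. I would take an arbitrary sequence $(A_n)$ of matrices in $\mathcal{A}_1$ converging (entrywise, equivalently in any matrix norm) to some matrix $A$, and argue that the limit $A$ again lies in $\mathcal{A}_1$. Concretely, I need to establish three things about $A$: that it is symmetric, that $\lambda_{\min}(A) = 1$, and that it is PD.

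First I would dispense with symmetry, which is the easy part. Each $A_n$ satisfies the linear (hence closed) constraint $A_n = A_n^\top$, and since the transpose map is continuous, the limit satisfies $A = A^\top$. Thus $A$ is symmetric and in particular has real eigenvalues, so that speaking of $\lambda_{\min}(A)$ makes sense.

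The crux of the argument is to pass the constraint $\lambda_{\min}(A_n) = 1$ to the limit, and this rests on the continuity of the map $A \mapsto \lambda_{\min}(A)$ on the space of symmetric matrices. I would justify this continuity using the variational characterization $\lambda_{\min}(A) = \min_{\|x\| = 1} x^\top A x$, which exhibits $\lambda_{\min}$ as the pointwise minimum over the compact unit sphere of the family of Rayleigh quotients $x \mapsto x^\top A x$, a family that is jointly continuous in $A$ and $x$; a standard compactness argument then gives continuity in $A$. (Alternatively, Weyl's inequality yields the cleaner estimate $|\lambda_{\min}(A_n) - \lambda_{\min}(A)| \leqslant \|A_n - A\|_2$, showing $\lambda_{\min}$ is even $1$-Lipschitz.) Either way, $\lambda_{\min}(A) = \lim_n \lambda_{\min}(A_n) = 1$.

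Finally, since $A$ is symmetric with $\lambda_{\min}(A) = 1 > 0$, all of its (real) eigenvalues are bounded below by $1$, and hence $A$ is PD. We conclude that $A \in \mathcal{A}_1$, so $\mathcal{A}_1$ contains all its limit points and is therefore closed. The one substantive ingredient is the continuity of $\lambda_{\min}$; everything else is routine. It is worth noting where the constraint $\lambda_{\min} = 1$ does real work: the set of all PD matrices is \emph{not} closed (a limit of PD matrices may be merely PSD and singular), and it is precisely the normalization $\lambda_{\min} = 1$ that bounds the smallest eigenvalue away from zero and rescues positive-definiteness in the limit.
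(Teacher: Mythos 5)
Your proof is correct and rests on the same key ingredient as the paper's: the continuity of $\lambda_{\min}$, together with the observation that the normalization $\lambda_{\min}=1$ is what restores positive-definiteness in the limit (the paper phrases this as intersecting the preimage of $\{1\}$ with the closed PSD cone, while you argue sequentially, but the two are routine reformulations of one another). If anything, yours is slightly more careful, since you justify the continuity of $\lambda_{\min}$ (via the Rayleigh-quotient characterization or Weyl's inequality) where the paper merely asserts it.
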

\begin{proof}
Note that $\lambda_{\min} (A)$ is a continuous function of $A$,
and $\{ 1 \}$ is a closed set. Morever the set of PSD matrices is a closed
set. We therefore have that the intersection of the preimage of
$\lambda_{\min} (A)$ on $\{ 1 \}$ and the set of PSD matrices is a closed set
(i.e. the set $\{ A : A \text{ is PSD and } \lambda_{\min} (A) = 1
\}$) is closed. But this set is precisely $\mathcal{A}_1$ since any PSD
matrix with $\lambda_{\min} (A) = 1$ is also clearly PD.
\end{proof}

\begin{definition}
A function $f : S \rightarrow \mathbbm{R}$ defined on
a set $S \subset \mathbbm{R}^n$ is called ``coercive'' if we have
\[ f (v) \rightarrow \infty \text{\qquad as\qquad} \| v \| \rightarrow \infty
   . \]
\end{definition}

The following is a standard result in finite dimensional analysis~\cite[Chapter 6]{heath2018scientific}:

\begin{theorem}
\label{theorem:coercive}
If $f : S \rightarrow \mathbbm{R}$ is coercive and
continuous and $S \subset \mathbbm{R}^n$ is a closed set then $f$ obtains in
minimum on $S$.
\end{theorem}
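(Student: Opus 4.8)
The plan is to reduce the statement to the extreme value theorem on a compact set, using coercivity to control the behavior of $f$ outside a large ball. First I would fix an arbitrary reference point $v_0 \in S$ (assuming $S$ is nonempty, as is implicit) and set $c_0 = f(v_0)$. By coercivity, $f(v) \to \infty$ as $\|v\| \to \infty$, so there exists a radius $R > 0$ such that $f(v) > c_0$ whenever $\|v\| > R$. Note that the contrapositive forces $\|v_0\| \leqslant R$, since $f(v_0) = c_0$ is not strictly greater than $c_0$.

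Next I would form the set $K = S \cap \{ v : \|v\| \leqslant R \}$. This set is bounded by construction, and it is closed as the intersection of two closed sets, using that $S$ is closed by hypothesis and that the closed ball of radius $R$ is closed. By the Heine--Borel theorem, a closed and bounded subset of $\mathbb{R}^n$ is compact, so $K$ is compact and nonempty (it contains $v_0$). Since $f$ is continuous, the extreme value theorem guarantees that $f$ attains a minimum on $K$, say at some $v^\star \in K$, with $f(v^\star) \leqslant f(v)$ for all $v \in K$. In particular $f(v^\star) \leqslant f(v_0) = c_0$.

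Finally I would upgrade this to a global minimizer over all of $S$ by a short case analysis. Take any $v \in S$. If $v \in K$, then $f(v) \geqslant f(v^\star)$ by minimality on $K$. If $v \notin K$, then $\|v\| > R$, so by the coercivity bound $f(v) > c_0 \geqslant f(v^\star)$. In either case $f(v) \geqslant f(v^\star)$, so $f$ attains its minimum on $S$ at $v^\star$, as claimed. There is no real obstacle here: the argument is merely an assembly of three standard ingredients---the coercivity bound, Heine--Borel compactness, and the extreme value theorem. The only point requiring a moment's care is confirming that the reference point $v_0$ lies in $K$, so that the comparison $f(v^\star) \leqslant c_0$ is legitimate; this is precisely what the contrapositive of the coercivity bound supplies.
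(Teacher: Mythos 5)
Your proof is correct. Note that the paper itself gives no proof of this theorem---it is stated as a standard result of finite-dimensional analysis and delegated to a citation \citep[Chapter 6]{heath2018scientific}---so there is no in-paper argument to compare against; your proof is the canonical one that such a reference would contain: use coercivity to find a radius $R$ beyond which $f$ exceeds $f(v_0)$, apply Heine--Borel and the extreme value theorem on the compact truncation $K = S \cap \{v : \|v\| \leqslant R\}$, and extend minimality from $K$ to all of $S$ by the coercivity bound. Your attention to the two easily-overlooked points---nonemptiness of $S$ and the fact that $v_0 \in K$, which legitimizes the comparison $f(v^\star) \leqslant f(v_0)$---is exactly the care the argument requires, and the proof applies verbatim to the paper's actual use case (matrices with the spectral norm, identified with a finite-dimensional real vector space).
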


Note that the theorem applies equally to the space of finite dimensional
real-valued matrices where the norm is any valid matrix norm, including the
standard spectral norm (which our notation will assume).

\begin{claim}
\label{claim:coercive}
$c (A)$ is coercive on the set $\mathcal{A}_1$.
\end{claim}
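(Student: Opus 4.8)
The plan is to exploit the scale-invariance that motivated the restriction to $\mathcal{A}_1$ in the first place. Since $c(\alpha A) = c(A)$, the function $c$ is not coercive on the full cone of PD matrices; the normalization $\lambda_{\min}(A) = 1$ is precisely what breaks this invariance and, as I will show, forces $c(A)$ to grow with $\|A\|$. Because $A$ is PD, its spectral norm equals its largest eigenvalue, so $\|A\| \to \infty$ is the same as $\lambda_{\max}(A) \to \infty$, and I only need to produce a lower bound on $c(A)$ that is proportional to $\lambda_{\max}(A)$.

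First I would invoke the eigendecomposition formula of Claim~\ref{claim:eigendecomposition}: writing $A = U\diag(d)U^\top$, we have
\[
c(A) = \Bigl(\sum_i d_i\, (u_i^\top X u_i)\Bigr)\Bigl(\sum_i d_i^{-1}(u_i^\top Y u_i)\Bigr).
\]
On $\mathcal{A}_1$ every $d_i \geq 1$, with equality for at least one index. Since $X$ and $Y$ are PD, each quadratic form $u_i^\top X u_i$ and $u_i^\top Y u_i$ is bounded below by $\lambda_{\min}(X) > 0$ and $\lambda_{\min}(Y) > 0$ respectively (the $u_i$ being unit vectors). I would then bound each factor by a single well-chosen term of its sum: the first factor is at least $\lambda_{\max}(A)\,\lambda_{\min}(X)$, obtained from the term whose $d_i = \lambda_{\max}(A)$, and the second factor is at least $\lambda_{\min}(Y)$, obtained from the term whose $d_i = 1$ (so that $d_i^{-1} = 1$). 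Multiplying gives
\[
c(A) \geq \lambda_{\min}(X)\,\lambda_{\min}(Y)\,\lambda_{\max}(A) = \lambda_{\min}(X)\,\lambda_{\min}(Y)\,\|A\|.
\]
As $\lambda_{\min}(X)$ and $\lambda_{\min}(Y)$ are fixed positive constants, the right-hand side tends to infinity as $\|A\| \to \infty$, establishing coercivity on $\mathcal{A}_1$.

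The one subtle point---and the reason the claim is stated on $\mathcal{A}_1$ rather than on all PD matrices---is the lower bound on the second factor $\trace(YA^{-1})$. A priori this factor could collapse toward zero when $A$ acquires a large eigenvalue, which would sabotage the product bound; it is exactly the constraint $\lambda_{\min}(A) = 1$ that pins one of the $d_i^{-1}$ to $1$ and thereby keeps $\trace(YA^{-1})$ bounded away from $0$. With coercivity in hand, Theorem~\ref{theorem:coercive} (applied together with Claim~\ref{claim:closed}, which ensures $\mathcal{A}_1$ is closed) then yields a minimizer of $c$ on $\mathcal{A}_1$, supplying the missing ingredient in the argument toward Theorem~\ref{theorem:minimizer}.
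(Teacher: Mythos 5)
Your proof is correct and follows essentially the same route as the paper's: both invoke Claim~\ref{claim:eigendecomposition}, bound the quadratic forms $u_i^\top X u_i$ and $u_i^\top Y u_i$ below by $\lambda_{\min}(X)$ and $\lambda_{\min}(Y)$, and use the normalization $\lambda_{\min}(A)=1$ to keep $\trace(YA^{-1})$ bounded away from zero while $\trace(XA)$ grows like $\|A\|$. The only cosmetic difference is that you retain a single term from each factor, whereas the paper keeps all terms and bounds the product by $(1+\|A\|)(1+\|A\|^{-1})$; both yield the same linear-in-$\|A\|$ lower bound.
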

\begin{proof}
Let $A \in \mathcal{A}_1$ with eigendecomposition given by
$A = U \diag (d) U^{\top}$. By Claim \ref{claim:eigendecomposition} we have that
\[ c (A) = \Bigl( \sum_i d_i  (u_i^{\top} Xu_i) \Bigr)  \Bigl( \sum_i d_i^{-1} (u_i^{\top} Yu_i) \Bigr) . \]
Since $A \in \mathcal{A}_1$ we can assume without loss of generality that $d_1 = 1$. We can also
assume without loss of generality that $d_2$ is the largest eigenvalue of $A$ so that $d_2 = \| A
\|$.

Because $A$, $X$ and $Y$ are all PD we have that $u_i^{\top} Xu_i \geqslant
\lambda_{\min} (X) > 0$ and $u_i^{\top} Yu_i \geqslant \lambda_{\min} (Y) > 0$
for each $i$. And thus
\begin{align*}
  c (A) & \geqslant \lambda_{\min} (X) \lambda_{\min} (Y)
  \Bigl( 1 + \| A \| + \sum_{i > 2} d_i \Bigr) \Bigl( 1 + \| A \|^{-1} + \sum_{i > 2} d_i^{-1} \Bigr)\\
  & \geqslant \lambda_{\min} (X) \lambda_{\min} (Y)  (1 + \| A \|)  (1 + \| A \|^{-1}) .
\end{align*}
Clearly this goes to infinity as $\| A \|$ does, which establishes the claim.
\end{proof}

\begin{claim}
\label{claim:minimumona1}
$c (A)$ attains its minimum on the set $\mathcal{A}_1$.
\end{claim}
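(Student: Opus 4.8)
The plan is to combine the three ingredients already in hand---closedness of $\mathcal{A}_1$, coercivity of $c$ on $\mathcal{A}_1$, and continuity of $c$---and invoke the standard existence result, Theorem~\ref{theorem:coercive}. The theorem requires a coercive, continuous function on a closed subset of a finite-dimensional space; two of the three hypotheses are furnished directly by Claim~\ref{claim:closed} (closedness) and Claim~\ref{claim:coercive} (coercivity), so the only work left is to verify continuity of $c$ on $\mathcal{A}_1$ and then assemble the pieces.

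First I would establish continuity of $c(A) = \trace(XA)\trace(YA^{-1})$ on $\mathcal{A}_1$. The map $A \mapsto \trace(XA)$ is linear in $A$ and hence continuous everywhere. For the second factor, I would note that matrix inversion $A \mapsto A^{-1}$ is continuous on the open set of invertible matrices (e.g.\ via Cramer's rule, the entries of $A^{-1}$ being rational functions of the entries of $A$ with nonvanishing denominator $\det A$), and that every $A \in \mathcal{A}_1$ is PD and therefore invertible. Composing with the continuous linear map $B \mapsto \trace(YB)$ shows $A \mapsto \trace(YA^{-1})$ is continuous on $\mathcal{A}_1$. Being a product of two continuous functions, $c$ is continuous on $\mathcal{A}_1$.

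With continuity in hand, I would apply Theorem~\ref{theorem:coercive} with $f = c$ and $S = \mathcal{A}_1 \subset \mathbb{R}^{n \times n}$: since $c$ is continuous and coercive on the closed set $\mathcal{A}_1$, it attains its minimum there, which is exactly the claim. This then slots into the surrounding argument: because $c(\alpha A) = c(A)$ we may rescale any PD matrix into $\mathcal{A}_1$ without changing the objective, so the minimizer over $\mathcal{A}_1$ is a global minimizer over all PD matrices; by continuity of $c$ on the open set of PD matrices it is an interior minimizer and hence a critical point; and by Claim~\ref{claim:minimum} all critical points share the same value, so all critical points are global minimizers, yielding the characterization $XA = \gamma A^{-1} Y$ via Claim~\ref{claim:critical}.

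I do not anticipate a genuine obstacle here, as the claim is essentially a bookkeeping step that packages the preceding technical claims into the form required by the existence theorem; the one point deserving care is ensuring that the inversion-based continuity argument is confined to the invertible (PD) matrices making up $\mathcal{A}_1$, so that $A^{-1}$ is well defined and continuous throughout the domain on which the theorem is applied.
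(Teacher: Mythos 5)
Your proposal is correct and takes exactly the same route as the paper, which proves the claim by citing Claim~\ref{claim:closed}, Claim~\ref{claim:coercive}, and Theorem~\ref{theorem:coercive}. Your explicit verification of continuity of $c$ on $\mathcal{A}_1$ (via continuity of matrix inversion on invertible matrices) is a sound addition that the paper leaves implicit.
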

\begin{proof}
This follows directly from Claims \ref{claim:closed} and \ref{claim:coercive} and
Theorem \ref{theorem:coercive}.
\end{proof}

\section[Estimating B online]{Estimating $B$ online} \label{sec:estimating_B}

The optimal $Q_0$ derived in Section \ref{sec:optimizing_q0} depends on the matrix $B$ (Equation \ref{eqn:b_matrix})
which is unknown due to its dependence on future coefficients and gradients.
To obtain a practical algorithm, we must approximate it online.
We will consider $B$ to be the final element in a sequence $B^{(1)} \dots B^{(T)}$ of matrices that accumulate information observed so far:
\[
B^{(k)} = \sum_{s \leqslant k} \sum_{t \leqslant k}
\Bigl( \sum_{q=1}^{\min(s, t)} \alpha_q^{-2} \| a_q \|^2 \Bigr)
\Bigl( \sum_{r=1}^{\min(s, t)} \alpha_r^{2} b^{(s)}_r b^{(t)\top}_r \Bigr).
\]
The remainder of this section develops an online algorithm that produces an unbiased estimate of
$B^{(s)}$ at each step $s$.
Although this will not yield an unbiased estimate of $B^{(T)}$ until the final time step $T$,
to the extent that $B^{(s)}$ is stationary we may use its intermediate estimates $B^{(s)}$ for $s < T$ as approximations to $B^{(T)}$.

First, we factorize the sums over $q$ and $r$ using the now-familiar random projections onto independent temporal noise vectors $\sigma$ and $\tau$:
\[
\sum_{q=1}^{\min(s, t)} \alpha_q^{-2} \| a_q \|^2
=
\E_\sigma \Bigl[
\Bigl( \sum_{q \leqslant s} \sigma_q \alpha_q^{-1} \| a_q \| \Bigr)
\Bigl( \sum_{q \leqslant t} \sigma_q \alpha_q^{-1} \| a_q \| \Bigr)
\Bigr]
\]
and
\[
\sum_{r=1}^{\min(s, t)} \alpha_r^{2} b^{(s)}_r b^{(t)\top}_r
=
\E_\tau \Bigl[
\Bigl( \sum_{r \leqslant s} \tau_r \alpha_r b^{(s)}_r \Bigr)
\Bigl( \sum_{r \leqslant t} \tau_r \alpha_r b^{(t)}_r \Bigr)^\top
\Bigr].
\]
By doing so we have broken up the dependency on $\min(s, t)$ into
separate factors.
Defining $\tilde{a}_s = \sum_{q \leqslant s} \sigma_q \alpha_q^{-1} \| a_q \|$,
we may now express $B^{(k)}$ as
\begin{align*}
B^{(k)}
   &= \E_{\sigma,\tau} \Bigl[
   \sum_{s \leqslant k} \sum_{t \leqslant k}
\tilde{a}_s \tilde{a}_t
\Bigl( \sum_{r \leqslant s} \tau_r \alpha_r b^{(s)}_r \Bigr)
\Bigl( \sum_{r \leqslant t} \tau_r \alpha_r b^{(t)}_r \Bigr)^\top
\Bigr]
\\ &= \E_{\sigma,\tau} \Bigl[
\Bigl( \sum_{s \leqslant k} \tilde{a}_s \bigl( \sum_{r \leqslant s} \tau_r \alpha_r b^{(s)}_r \bigr) \Bigr)
\Bigl( \sum_{t \leqslant k} \tilde{a}_t \bigl( \sum_{r \leqslant t} \tau_r \alpha_r b^{(t)}_r \bigr)^\top \Bigr)
\Bigr]
\\ &=
\E_{\sigma,\tau} [ m_k m_k^\top ],
\end{align*}
the expectation of a rank-one estimator given by the outer product of the vector
\[
m_k \triangleq \sum_{s \leqslant k} \tilde{a}_s
\Bigl( \sum_{r \leqslant s} \tau_r \alpha_r b^{(s)}_r \Bigr)
\]
with itself. As this vector has zero mean, $B^{(k)}$ is its covariance.

\newcommand{\htildeacc}{\gamma_t \dhtdhs{t}{t-1} \htilde_{t-1}}
\newcommand{\nutildeacc}{\tilde{\nu}_{t-1}}
\newcommand{\htildenew}{\tau_t \beta_t \dhtdzt{t} \nu_t}
\newcommand{\nutildenew}{\nu_t}

The scalar $\tilde{a}_s$ is readily accumulated online,
but the vector $\sum_{r \leqslant s} \tau_q \alpha_r b^{(s)}_r$
requires approximate forward differentiation.
We can estimate $m_k$ by
\[
\tilde{m}_k \triangleq \sum_{s \leqslant k} \tilde{a}_s
\Bigl( \sum_{r \leqslant s} \tau_r \alpha_r b^{(s)\top}_r \nu_r \Bigr)
\Bigl( \sum_{r \leqslant s} \nu_r \Bigr)
\]
which can be computed efficiently according to the recursions
\begin{align*}
   \tilde{a}_t &= \gamma_t^{-1} \tilde{a}_{t-1} + \sigma_t \beta_t^{-1} \| a_t \|
\\ \htilde_t &= \eta_t \htildeacc + \zeta_t \htildenew
\\ \tilde{\nu}_t &= \eta_t^{-1} \nutildeacc + \zeta_t^{-1} \nutildenew
\\ \tilde{m}_t &= \tilde{m}_{k-1} + \tilde{a}_t \dltdht{t} \htilde_t \tilde{\nu}_t.
\end{align*}
The coefficients $\eta_t, \zeta_t$ can be used to reduce the variance of
$\htilde_t \tilde{\nu}_t^\top$,
e.g. by the \gir choice
$\eta_t^2 = \inlinefrac{\norm{\nutildeacc}}{\norm{\htildeacc}},
\zeta_t^2 = \inlinefrac{\norm{\nutildenew}}{\norm{\htildenew}}$.

Although $\E_\nu [\tilde{m}_k] = m_k$ (i.e. $\tilde{m}_k$ is an unbiased estimator of $m_k$),
$\E_\nu [\tilde{m}_k \tilde{m}_k^\top] \neq \E_\nu [\tilde{m}_k] \E_\nu [\tilde{m}_k^\top]$
and therefore $\tilde{m}_k \tilde{m}_k^\top$ is not an unbiased estimator of $B^{(k)}$.
In order to estimate $B^{(k)}$ we require a replication $\tilde{n}_k$ of $\tilde{m}_k$ with
independent spatial noise $\mu$ in place of $\nu$:
\[
\tilde{n}_k \triangleq \sum_{s \leqslant k} \tilde{a}_s
\Bigl( \sum_{r \leqslant s} \tau_r \alpha_r b^{(s)\top}_r \mu_r \Bigr)
\Bigl( \sum_{r \leqslant s} \mu_r \Bigr)
\]
computed by similar recursions as $\tilde{m}_k$.
Now
\[
\E_{\sigma,\tau,\nu,\mu} [ \tilde{m}_k \tilde{n}_k^\top ]
=
\E_{\sigma,\tau} \bigl[ \E_\nu [\tilde{m}_k] \E_\mu [\tilde{n}_k^\top] \bigr]
=
\E_{\sigma,\tau} [m_k m_k^\top]
= B^{(k)}.
\]
It should be noted that although $B^{(k)}$ is symmetric PSD, the estimates $\tilde{m}_k \tilde{n}_k^\top$ are not.
Symmetry may however be restored by use of the estimator $\inlinefrac{1}{2} (\tilde{m}_k \tilde{n}_k^\top + \tilde{n}_k \tilde{m}_k^\top)$.

\section{Hyperparameter Settings for Variance Reduction Experiments}
\label{sec:q_experiments_hyperparameters}

The following table lists the hyperparameter settings used for the experiments in Section \ref{sec:q_experiments}:
    
    \begin{tabular}{ll|rrrr}
    \Tstrut
        $Q_0$ & $\alpha$ & Learning rate & Momentum & $\bar{B}$ decay & $\bar{B}$ dampening
    \Bstrut
    \\
        \hline
    \Tstrut
        identity & \gir & 0.005 & 0.8 & & \\
        identity & ours & 0.005 & 0.5 & & \\
        ours & \gir & 0.005 & 0.5 & 0.9 & 0.008 \\
        ours & ours & 0.003 & 0.8 & 0.9 & 0.005 \\
    \Bstrut
    \end{tabular}

These settings were found by grid search on
learning rate in $\{0.001,0.003,0.005,0.007,0.009\}$,
momentum in $\{0.5,0.8\}$,
$\bar{B}$ decay rate in $\{0.8, 0.9, 0.95\}$ and
$\bar{B}$ dampening coefficient in $\{5 \times 10^{-3}, 5 \times 10^{-4}, 5 \times 10^{-5}\}$.

\section{Variance of Preactivation-Space Projection} \label{sec:preuoro_variance_comparison}

This appendix explores the variance of the estimator from Section \ref{sec:preuoro}
and its relationship to the variance of the usual total gradient estimator from Section \ref{sec:total_gradient_estimate} (Equation \ref{sec:total_gradient_estimate}).
The former is given by Equation \ref{eqn:preuoro_total_gradient_estimate}:
\begin{align*}
\vectorized \Bigl( \sum_{t \leqslant T}
\bar{B}^{(t) \top} \bar{Q} \tau \tau^{\top} \bar{Q}^{-1} \bar{S}^{(t)} \bar{J}
\Bigr)
\end{align*}
with matrices $
\bar{B}^{(t) \top} = \begin{pmatrix}
b_1^{\smash[t]{(t)}} & \cdots & b_T^{\smash[t]{(t)}}
\end{pmatrix},
\bar{Q} = \diag(\alpha),
\bar{S}^{(t)}_{i j} = \delta_{i j} \indicator{i \geqslant t},
\bar{J} = \begin{pmatrix}
a_1 & \cdots & a_T
\end{pmatrix}^\top
$
that mirror similarly-named quantities from Section \ref{sec:total_gradient_estimate}.

Noting that for a matrix $X$,
\begin{align*}
\trace \bigl(
\Var [ \vectorized (X) ]
\bigr)
=\; &
\E \bigl[ \| \vectorized (X) \|^2 \bigr]
-
\bigl\| \E [ \vectorized (X) ] \bigr\|^2
\\=\; &
\E \bigl[ \| X \|^2_F \bigr]
-
\bigl\| \E [ X ] \bigr\|^2_F
=
\trace \bigl( \E [ X X^\top ] \bigr)
-
\bigl\| \E [ X ] \bigr\|^2_F
\end{align*}
we have by Proposition \ref{prop:secondmoment} (with $\kappa = 0$) that
\begin{align}
& \trace \biggl(
\Var \Bigl[ \vectorized \Bigl(
\sum_{t \leqslant T} \bar{B}^{(t)\top} \bar{Q} \tau \tau^\top \bar{Q}^{-1} \bar{S}^{(t)} \bar{J}
\Bigr) \Bigr]
\biggr) \notag
\\&=
\sum_{s \leqslant T} \sum_{t \leqslant T}
\trace \bigl(
\E [
\bar{J}^\top \bar{S}^{(s)} \bar{Q}^{-\top} \tau \tau^\top \bar{Q}^\top \bar{B}^{(s)}
\bar{B}^{(t)\top} \bar{Q} \tau \tau^\top \bar{Q}^{-1} \bar{S}^{(t)} \bar{J}
]
\bigr)
-
\| \dldw \|^2_F \notag
\\&=
\sum_{s \leqslant T} \sum_{t \leqslant T}
\trace ( \bar{B}^{(s)} \bar{B}^{(t)\top} \bar{Q} \bar{Q}^\top )
\trace ( \bar{S}^{(t)} \bar{J} \bar{J}^\top \bar{S}^{(s)} (\bar{Q} \bar{Q}^\top)^{-1} )
+
\| \dldw \|^2_F. \label{eqn:totalvariance_preuoro}
\end{align}

The variance of the usual estimator (Section \ref{sec:total_gradient_estimate}) is given by Equation \ref{eqn:total_gradient_variance_trace}:
\begin{align}
& \trace \Bigl( \Var \bigl[
\sum_{t \leqslant T} b^{(t)\top} Q u u^\top Q^{-1} S^{(t)} J
\bigr]
\Bigr) \notag
\\&=
\sum_{s \leqslant T} \sum_{t \leqslant T}
\trace ( b^{(s)} b^{(t)\top} Q Q^\top )
\trace ( S^{(t)} J J^\top S^{(s)} (Q Q^\top)^{-1} )
+
\| \dldw \|^2_F.  \label{eqn:totalvariance_recap}
\end{align}
We will now relate Equation \ref{eqn:totalvariance_recap} to Equation \ref{eqn:totalvariance_preuoro}
using the following observations:
\begin{align*}
b^{(t)\top} = \vectorized (\bar{B}^{(t)}),
Q = \bar{Q} \otimes Q_0,
S^{(t)} = \bar{S}^{(t)} \otimes I, \mbox{ and }
J J^\top = \bar{J} \bar{J}^\top \otimes I.
\end{align*}
Thus $\trace ( b^{(s)} b^{(t)\top} Q Q^\top )$ can be written
\begin{align*}
\trace ( b^{(s)} b^{(t)\top} Q Q^\top )
&=
\vectorized ( \bar{B}^{(t)} )^\top (\bar{Q} \bar{Q}^\top \otimes Q_0 Q_0^\top) \vectorized (\bar{B}^{(s)})
\\&=
\vectorized ( \bar{B}^{(t)} )^\top \vectorized ( Q_0 Q_0^\top \bar{B}^{(s)\top} \bar{Q} \bar{Q}^\top )
\\&=
\trace ( \bar{B}^{(t)} Q_0 Q_0^\top \bar{B}^{(s)\top} \bar{Q} \bar{Q}^\top )
\end{align*}
and $\trace ( S^{(t)} J J^\top S^{(s)} (Q Q^\top)^{-1} )$ can be written
\begin{align*}
\trace ( S^{(t)} J J^\top S^{(s)} (Q Q^\top)^{-1} )
&=
\trace \bigl(
  (\bar{S}^{(t)} \otimes I)
  (\bar{J} \bar{J}^\top \otimes I)
  (\bar{S}^{(s)} \otimes I)
  (\bar{Q}^{-2} \otimes (Q_0 Q_0^\top)^{-1})
  \bigr)
\\&=
\trace \bigl( \bar{S}^{(t)} \bar{J} \bar{J}^\top \bar{S}^{(s)} (\bar{Q} \bar{Q}^\top)^{-1}) \otimes (Q_0 Q_0^\top)^{-1} \bigr)
\\&=
\trace \bigl( \bar{S}^{(t)} \bar{J} \bar{J}^\top \bar{S}^{(s)} (\bar{Q} \bar{Q}^\top)^{-1} \bigr) \trace \bigl( (Q_0 Q_0^\top)^{-1} \bigr).
\end{align*}
This results in the following expression for the dominant term of the variance in Equation \ref{eqn:totalvariance_recap}:
\[
\sum_{s \leqslant T} \sum_{t \leqslant T}
\trace \bigl( \bar{B}^{(t)} Q_0 Q_0^\top \bar{B}^{(s)\top} \bar{Q} \bar{Q}^\top \bigr)
\trace \bigl( \bar{S}^{(t)} \bar{J} \bar{J}^\top \bar{S}^{(s)} (\bar{Q} \bar{Q}^\top)^{-1} \bigr) \trace \bigl( (Q_0 Q_0^\top)^{-1} \bigr).
\]

\bibliography{main}

\end{document}